\newcommand{\pa}[0]{\textrm{pa}}
\newcommand{\dkls}[3]{\mathbb{D}_{KL}^{#1}[#2 \, \|\, #3]}
\newcommand{\ddd}[3]{\mathbb{D}(#2 \, \|\, #3)}
\newcommand\cut[1]{}
\newcommand{\elbofinal}{\underline{\mathcal{L}}}
\newcommand{\tm}{\widetilde{m}}
\newcommand{\tv}{\widetilde{v}}
\newcommand{\tvm}{\widetilde{\vm}}
\newcommand{\tvgamma}{\widetilde{\vgamma}}
\newcommand{\tp}{\tilde{p}}
\newcommand{\squishlist}{
   \begin{list}{$\bullet$}
    { \setlength{\itemsep}{0pt}      \setlength{\parsep}{3pt}
      \setlength{\topsep}{3pt}       \setlength{\partopsep}{0pt}
      \setlength{\leftmargin}{1.5em} \setlength{\labelwidth}{1em}
      \setlength{\labelsep}{0.5em} } }
\newcommand{\squishlisttwo}{
   \begin{list}{$\bullet$}
    { \setlength{\itemsep}{0pt}    \setlength{\parsep}{0pt}
      \setlength{\topsep}{0pt}     \setlength{\partopsep}{0pt}
      \setlength{\leftmargin}{2em} \setlength{\labelwidth}{1.5em}
      \setlength{\labelsep}{0.5em} } }
\newcommand{\squishend}{
    \end{list}  }
\newtheorem{thm}{Theorem}{}
\newtheorem{prop}{Proposition}{}
{}
\newtheorem{lemma}{Lemma}{}
\newcommand{\half}{\mbox{$\frac{1}{2}$}}
\newcommand{\rnd}[1]{\left(#1\right)}
\newcommand{\sqr}[1]{\left[#1\right]}
\newcommand{\crl}[1]{\left\{#1\right\}}
\newcommand{\myexpect}{\mathbb{E}}
\newcommand{\gauss}{\mbox{${\cal N}$}}
\newcommand{\myvec}[1]{\mbox{$\mathbf{#1}$}}
\newcommand{\myvecsym}[1]{\mbox{$\boldsymbol{#1}$}}
\newcommand{\vone}{\mbox{$\myvecsym{1}$}}
\newcommand{\vbeta}{\mbox{$\myvecsym{\beta}$}}
\newcommand{\vdelta}{\mbox{$\myvecsym{\delta}$}}
\newcommand{\veta}{\mbox{$\myvecsym{\eta}$}}
\newcommand{\vgamma}{\mbox{$\myvecsym{\gamma}$}}
\newcommand{\vmu}{\mbox{$\myvecsym{\mu}$}}
\newcommand{\vlambda}{\mbox{$\myvecsym{\lambda}$}}
\newcommand{\vtheta}{\mbox{$\myvecsym{\theta}$}}
\newcommand{\vSigma}{\mbox{$\myvecsym{\Sigma}$}}
\newcommand{\vxi}{\mbox{$\myvecsym{\xi}$}}
\newcommand{\vg}{\mbox{$\myvec{g}$}}
\newcommand{\vm}{\mbox{$\myvec{m}$}}
\newcommand{\vs}{\mbox{$\myvec{s}$}}
\newcommand{\vx}{\mbox{$\myvec{x}$}}
\newcommand{\vy}{\mbox{$\myvec{y}$}}
\newcommand{\vz}{\mbox{$\myvec{z}$}}
\newcommand{\vA}{\mbox{$\myvec{A}$}}
\newcommand{\vB}{\mbox{$\myvec{B}$}}
\newcommand{\vG}{\mbox{$\myvec{G}$}}
\newcommand{\vI}{\mbox{$\myvec{I}$}}
\newcommand{\vK}{\mbox{$\myvec{K}$}}
\newcommand{\vL}{\mbox{$\myvec{L}$}}
\newcommand{\vT}{\mbox{$\myvec{T}$}}
\newcommand{\vV}{\mbox{$\myvec{V}$}}
\newcommand{\vX}{\mbox{$\myvec{X}$}}
\newcommand{\diag}{\mbox{$\mbox{diag}$}}
\newcommand{\trace}{\mbox{Tr}}
\newcommand{\calD}{\mbox{${\cal D}$}}
\newcommand{\data}{\calD}
\newcommand{\be}{\begin{equation}}
\newcommand{\ee}{\end{equation}}
\newcommand{\bea}{\begin{eqnarray}}
\newcommand{\eea}{\end{eqnarray}}
\newcommand{\beaa}{\begin{eqnarray*}}
\newcommand{\eeaa}{\end{eqnarray*}}
\def\norm#1{\|#1\|}
\newcommand{\argmin}[1]{\mathop{\hbox{argmin}}_{#1}}
\title{Faster Stochastic Variational Inference using\\ Proximal-Gradient Methods with General Divergence Functions}
\author{
Mohammad Emtiyaz Khan \\
Ecole Polytechnique F\'{e}d\'{e}rale de Lausanne\\
Lausanne Switzerland\\
\texttt{emtiyaz@gmail.com} \\
\And
Reza Babanezhad\\
University of British Columbia\\
Vancouver, Canada\\
\texttt{babanezhad@gmail.com} \\
\AND
Wu Lin\\
University of Waterloo\\
Waterloo, Canada\\
\texttt{wu.lin@uwaterloo.ca} \\
\And
Mark Schmidt\\
University of British Columbia\\
Vancouver, Canada\\
\texttt{schmidtm@cs.ubc.ca} \\
\And
Masashi Sugiyama\\
University of Tokyo\\
Tokyo, Japan\\
\texttt{sugi@k.u-tokyo.ac.jp} \\
}
\begin{document}

\maketitle

\begin{abstract} 
Several recent works have explored stochastic gradient methods for variational inference that exploit the geometry of the variational-parameter space. However, the theoretical properties of these methods are not well-understood and these methods typically only apply to conditionally-conjugate models. We present a new stochastic method for variational inference which exploits the geometry of the variational-parameter space and also yields simple closed-form updates even for non-conjugate models. We also give a convergence-rate analysis of our method and many other previous methods which exploit the geometry of the space. Our analysis generalizes existing convergence results for stochastic mirror-descent on non-convex objectives by using a more general class of divergence functions. Beyond giving a theoretical justification for a variety of recent methods, our experiments show that new algorithms derived in this framework lead to state of the art results on a variety of problems. Further, due to its generality, we expect that our theoretical analysis could also apply to other applications.
\end{abstract} 

\section{INTRODUCTION}

Variational inference methods are one of the most widely-used computational tools to deal with the intractability of Bayesian inference, while stochastic gradient (SG) methods are one of the most widely-used tools for solving optimization problems on huge datasets. The last three years have seen an explosion of work exploring SG methods for variational inference~\citep{hoffman2013stochastic,salimans2013fixed, ranganath2013black, titsias2014doubly, mnih2014neural, kucukelbir2014fully}. In many settings, these methods can yield simple updates and scale to huge datasets.

A challenge that has been addressed in many of the recent works on this topic is that the ``black-box" SG method ignores the geometry of the variational-parameter space. This has lead to methods like the stochastic variational inference (SVI) method of~\citet{hoffman2013stochastic}, that uses \emph{natural gradients} to exploit the geometry. This leads to better performance in practice, but this approach only applies to conditionally-conjugate models. In addition, it is not clear how using natural gradients for variational inference affects the theoretical convergence rate of SG methods.

In this work we consider a general framework that (i) can be stochastic to allow huge datasets, (ii) can exploit the geometry of the variational-parameter space to improve performance, and (iii) can yield a closed-form update even for non-conjugate models. The new framework can be viewed as a stochastic generalization of the proximal-gradient method of~\citet{Khan15nips}, which splits the objective into conjugate and non-conjugate terms. By linearizing the non-conjugate terms, this previous method as well as our new method yield simple closed-form proximal-gradient updates even for non-conjugate models. 

While proximal-gradient methods have been well-studied in the optimization community~\citep{beck2009fast}, like SVI there is nothing known about the convergence rate of the method of~\citet{Khan15nips} because it uses ``divergence" functions which do not satisfy standard assumptions.
Our second contribution is to \emph{analyze the convergence rate} of the proposed method. In particular, we generalize an existing result on the convergence rate of stochastic mirror descent in non-convex settings~\citep{ghadimi2014mini} to allow a  general class of divergence functions that includes the cases above (in both deterministic and stochastic settings). While it has been observed empirically that including an appropriate divergence function enables larger steps than basic SG methods, this work gives the first theoretical result justifying the use of these more-general divergence functions. It in particular reveals how different factors affect the convergence rate such as the Lipschitz-continuity of the lower bound, the information geometry of the divergence functions, and the variance of the stochastic approximation. Our results also suggest conditions under which the proximal-gradient steps of~\citet{Khan15nips} can make more progress than (non-split) gradient steps, and sheds light on the choice of step-size for these methods. 
Our experimental results indicate that the new method leads to improvements in performance on a variety of problems, and we note that the algorithm and theory might be useful beyond the variational inference scenarios we have considered in this work.


\section{VARIATIONAL INFERENCE} \label{sec:variational_inference}

Consider a general latent variable model where we have a data vector $\vy$ of length $N$ and a latent vector $\vz$ of length $D$. 
In Bayesian inference, we are interested in computing the marginal likelihood $p(\vy)$, which can be written as the integral of the joint distribution $p(\vy,\vz)$ over all values of $\vz$.
This integral is often intractable, and in variational inference we typically approximate it with the evidence lower-bound optimization (ELBO) approximation $\elbofinal$. This approximation introduces a distribution $q(\vz|\vlambda)$ and chooses the variational parameters $\vlambda$ to maximize the following lower bound on the marginal likelihood:
\begin{equation}
\begin{aligned}
 &  \log p(\vy) = \log\int q(\vz|\vlambda)\frac{p(\vy,\vz)}{q(\vz|\vlambda)}\, d\vz, \\
 &  \ge \max_{\boldsymbol{\lambda} \in \mathcal{S}} \elbofinal(\vlambda) := \myexpect_{q(\mathbf{z}|\boldsymbol{\lambda})} \sqr{ \log \frac{p(\vy,\vz)}{q(\vz|\vlambda)} } . 
\end{aligned}
\label{eq:LB} 
\end{equation}
The  inequality follows from concavity of the logarithm function. The set $\mathcal{S}$ is the set of valid parameters $\vlambda$.

To optimize $\vlambda$, one of the seemingly-simplest approaches is gradient descent:
$\vlambda_{k+1} = \vlambda_k + \beta_k \nabla \elbofinal(\vlambda_k)$, 
which can be viewed as optimizing a quadratic approximation of $\elbofinal$,
\begin{align}
\vlambda_{k+1} &= \argmin{\boldsymbol{\lambda}\in\mathcal{S}} \sqr{ -\vlambda^T \nabla \elbofinal(\vlambda_k) + \frac{1}{2\beta_k} \|\vlambda - \vlambda_k \|^2_2}. \label{eq:equivalent_grad_descent}
\end{align}
While we can often choose the family $q$ so that it has convenient computational properties, it might be impractical to apply gradient descent in this context when we have a very large dataset or when some terms in the lower bound are intractable.
Recently, SG methods have been proposed to deal with these issues~\citep{ranganath2013black, titsias2014doubly}: they allow large datasets by using random subsets (mini-batches) and can approximate intractable integrals using Monte Carlo methods that draw samples from $q(\vz|\vlambda)$.

A second drawback of applying gradient descent to variational inference is that it uses the Euclidean distance and thus ignores the \emph{geometry of the variational-parameter space}, which often results in slow convergence. Intuitively, \eqref{eq:equivalent_grad_descent} implies that we should move in the direction of the gradient, but not move $\vlambda_{k+1}$ too far away from $\vlambda_k$ in terms of the Euclidean distance. However, the Euclidean distance is not appropriate for variational inference because $\vlambda$ is the parameter vector of a distribution; the Euclidean distance is often a poor measure of dissimilarity between distributions. The following example from \citet{hoffman2013stochastic} illustrates this point: the two normal distributions $\gauss(0,10000)$ and $\gauss(10,10000)$ are almost indistinguishable, yet the Euclidean distance between their parameter vectors is 10, whereas the distributions $\gauss(0,0.01)$ and $\gauss(0.1,0.01)$ barely overlap, but their Euclidean distance between parameters is only $0.1$.


{\bf Natural-Gradient Methods:}
The canonical way to address the problem above is by replacing the Euclidean distance in \eqref{eq:equivalent_grad_descent} with another divergence function. For example, the \emph{natural gradient} method defines the iteration by using the symmetric Kullback-Leibler (KL) divergence~\citep{hoffman2013stochastic, pascanu2013revisiting, amari1998natural},
\begin{equation}
\label{eq:naturalgradient} 
\begin{aligned}
&\vlambda_{k+1} =\\
&\argmin{\boldsymbol{\lambda}\in\mathcal{S}} \sqr{ -\vlambda^T \nabla \elbofinal(\vlambda_k) + \frac{1}{\beta_k}\dkls{sym}{q(\vz|\vlambda)}{q(\vz|\vlambda_k)}}.
\end{aligned}
\end{equation}
This leads to the update
\begin{align}
\vlambda_{k+1} = \vlambda_{k} + \beta_k \sqr{\nabla^2 \vG(\vlambda_k)}^{-1} \nabla \elbofinal(\vlambda_{k}), \label{eq:natural_grad}
\end{align}
where $\vG(\vlambda)$ is the Fisher information-matrix, 
\begin{align*}
\vG(\vlambda) := \myexpect_{q(\mathbf{z}|\boldsymbol{\lambda})} \crl{\sqr{\nabla \log q(\vz|\vlambda)}\sqr{\nabla \log q(\vz|\vlambda)}^T} . \nonumber
\end{align*}
\cite{hoffman2013stochastic} show that the natural-gradient update can be computationally simpler than gradient descent for conditionally-conjugate exponential family models. In this family, we assume that the distribution of $\vz$ factorizes as $\prod_i p(\vz^i|\pa^i)$ where $\vz^i$ are disjoint subsets of $\vz$ and $\pa^i$ are the parents of the $\vz^i$ in a directed acyclic graph. This family also assumes that each conditional distribution is in the exponential family,
\[p(\vz^i|\pa^i) := h^i(\vz^i) \exp\sqr{ [\veta^i(\pa^i)]^T \vT^i(\vz^i) - A^i(\veta^i)},\]  
where $\veta^i$ are the natural parameters, $\vT^i(\vz^i)$ are the sufficient statistics, $A^i(\veta^i)$ is the partition function, and $h^i(\vz^i)$ is the base measure.
\cite{hoffman2013stochastic} consider a mean-field approximation $q(\vz|\vlambda) = \prod_i q^i(\vz^i|\vlambda^i)$ where each $q^i$ belongs to the same exponential-family distribution as the joint distribution,
\begin{align*}
q^i(\vz^i) := h^i(\vz^i) \exp\sqr{ (\vlambda^i)^T \vT^i(\vz^i) - A^i(\vlambda^i)} .   
\end{align*}
The parameters of this distribution are denoted by $\vlambda^i$ to differentiate them from the joint-distribution parameters $\veta^i$.

As shown by \cite{hoffman2013stochastic}, the Fisher matrix for this problem is equal to $\nabla^2 A^i(\vlambda^i)$ and the gradient of the lower bound with respect to $\vlambda^i$ is equal to $\nabla^2 A^i(\vlambda^i) (\vlambda^i - \vlambda^i_*)$ where $\vlambda^i_*$ are the mean-field parameters~\citep[see][]{paquetconvergence}. Therefore, when computing the natural-gradient, the $\nabla^2 A^i(\vlambda^i)$ terms cancel out and the natural-gradient is simply $\vlambda^i - \vlambda^i_*$ which is much easier to compute than the actual gradient. Unfortunately, for non-conjugate models this cancellation does not happen and the simplicity of the update is lost. The Riemannian conjugate-gradient method of~\citet{Honkela:11} has similar issues, in that computing $\nabla^2 A(\vlambda)$ is typically very costly.

{\bf KL-Divergence Based Methods:}
Rather than using the symmetric-KL,~\citet{theis2015trust} consider using the KL divergence $\dkls{}{q(\vz|\vlambda)}{q(\vz|\vlambda_k)}$ within a stochastic proximal-point method:
\begin{equation}
\begin{aligned}
\vlambda_{k+1} &= \argmin{\boldsymbol{\lambda}\in\mathcal{S}} \sqr{ - \elbofinal(\vlambda) + \frac{1}{\beta_k}\dkls{}{q(\vz|\vlambda)}{q(\vz|\vlambda_k)} }. \label{eq:theis}
\end{aligned}
\end{equation}
This method yields better convergence properties, but requires numerical optimization to implement the update even for conditionally-conjugate models. \citet{Khan15nips} considers a deterministic proximal-gradient variant of this method by splitting the lower bound into $-\elbofinal := f + h$, where $f$ contains all the ``easy" terms and $h$ contains all the ``difficult" terms. By linearizing the ``difficult" terms, this leads to a closed-form update even for non-conjugate models. The update is given by:
\begin{equation}
\begin{aligned}
\vlambda_{k+1} &= \argmin{\boldsymbol{\lambda}\in\mathcal{S}}\left[ \vlambda^T[ \nabla f(\vlambda_k)] + h(\vlambda) \right. \\
&\quad\quad\quad\quad\quad\quad \left. + \frac{1}{\beta_k}\dkls{}{q(\vz|\vlambda)}{q(\vz|\vlambda_k)}\right].\label{eq:khan2015}
\end{aligned}
\end{equation}
However, this method requires the exact gradients which is usually not  feasible for large dataset and/or complex models. 

{\bf Mirror Descent Methods:}
In the optimization literature, \emph{mirror descent} (and stochastic mirror descent) algorithms are a generalization of~\eqref{eq:equivalent_grad_descent} where the squared-Euclidean distance can be replaced by any Bregman divergence $\mathbb{D}_F(\vlambda \| \vlambda_k)$ generated from a strongly-convex function $F(\vlambda)$~\citep{beck2003mirror},
\begin{align}
\vlambda_{k+1} &= \argmin{\boldsymbol{\lambda}\in\mathcal{S}} \left\{-\vlambda^T \nabla \elbofinal(\vlambda_k) + \frac{1}{\beta_k} \mathbb{D}_{F}(\vlambda\|\vlambda_k) \right\}. \label{eq:mirror_descent}
\end{align}
The convergence rate of mirror descent algorithm has been analyzed in convex~\citep{duchi2010composite} and more recently in non-convex~\citep{ghadimi2014mini} settings. However, mirror descent does not cover the cases described above in \eqref{eq:theis} and \eqref{eq:khan2015} when a KL divergence between two exponential-family distributions is used with $\vlambda$ as the natural-parameter. For such cases, the Bregman divergence corresponds to a KL divergence with swapped parameters~\citep[see][Equation~29]{nielsen2009statistical},
\begin{align}
\mathbb{D}_{A}(\vlambda\|\vlambda_k) &:= A(\vlambda) - A(\vlambda_k) - [\bigtriangledown A(\vlambda_k)]^T (\vlambda - \vlambda_k) \nonumber \\
&= \dkls{}{q(\vz|\vlambda_k)}{q(\vz|\vlambda)}. \label{eq:breg}
\end{align}
where $A(\vlambda)$ is the partition function of $q$. Because \eqref{eq:theis} and \eqref{eq:khan2015} both use a KL divergence where the second argument is fixed to $\vlambda_k$, instead of the first argument, they are not covered under the mirror-descent framework. 
In addition, even though mirror-descent has been used for variational inference~\citep{ravikumar2010message}, 
Bregman divergences do not yield an efficient update in many scenarios.

%
%

\section{PROXIMAL-GRADIENT SVI} \label{sec:pgsvi}
Our proximal-gradient stochastic variational inference (PG-SVI) method extends \eqref{eq:khan2015} to allow stochastic gradients $\widehat{\nabla} f(\vlambda_k)$ and general divergence functions $\mathbb{D}(\vlambda\|\vlambda_k)$ 
by using the iteration
  \begin{align}
    \vlambda_{k+1} &= \argmin{\boldsymbol{\lambda} \in \mathcal{S}} \crl{ \vlambda^T \sqr{ \widehat{\bigtriangledown} f(\vlambda_k)} + h(\vlambda) + \frac{1}{\beta_k} \mathbb{D}(\vlambda\, \|\, \vlambda_k) }. \label{eq:subproblem}
  \end{align}
This unifies a variety of existing approaches since it allows:
\begin{enumerate}
\setlength\itemsep{.25em}
\item Splitting of $\elbofinal$ into a difficult term $f$ and a simple term $h$, similar to the method of \cite{Khan15nips}.
\item A stochastic approximation $\widehat{\nabla} f$ of the gradient of the difficult term, similar to SG methods.
\item Divergence functions $\mathbb{D}$ that incorporate the geometry of the parameter space, similar to methods discussed in Section \ref{sec:variational_inference} (see~\eqref{eq:naturalgradient}, \eqref{eq:theis}, \eqref{eq:khan2015}, and \eqref{eq:mirror_descent}).
\end{enumerate}
Below, we describe each feature in detail, along with the precise assumptions used in our analysis.

\subsection{SPLITTING}\label{sec:splitting}
Following~\citet{Khan15nips}, we split the lower bound into a sum of a ``difficult" term $f$ and an ``easy" term $h$, enabling a closed-form solution for \eqref{eq:subproblem}. 
Specifically, we split using $p(\vy,\vz)/q(\vz|\vlambda) = c\, \tp_d(\vz|\vlambda) \tp_e(\vz|\vlambda)$, where $\tp_d$ contains all factors that make the optimization difficult, and $\tp_e$ contains the rest (while $c$ is a constant).
By substituting in \eqref{eq:LB}, we get the following split of the lower bound:
\begin{align*}
&\elbofinal(\vlambda) =  \underbrace{\myexpect_q[ \log \tilde{p}_d(\vz|\vlambda)]}_{-f(\boldsymbol{\lambda})} + \underbrace{\myexpect_q[ \log \tilde{p}_e(\vz|\vlambda)]}_{-h(\boldsymbol{\lambda})} + \log c.
\end{align*}
Note that $\tp_d$ and $\tp_e$ need not be probability distributions.

We make the following assumptions about $f$ and $h$:
\begin{description}
\item[(A1)] The function $f$ is differentiable and its gradient is $L-$Lipschitz-continuous, i.e. $\forall \vlambda$ and $\vlambda' \in \mathcal{S}$ we have
\[
\norm{\nabla f(\vlambda) - \nabla f(\vlambda')} \le L\norm{\vlambda -\vlambda'}.
\]
\item[(A2)] The function $h$ can be a general convex function.
\end{description}
These assumptions are very weak. The function $f$ can be non-convex and the Lipschitz-continuity assumption is  typically satisfied in practice (and indeed the analysis can be generalized to only require this assumption on a smaller set containing the iterations). The assumption that $h$ is convex seems strong, but note that we can always take $h = 0$ in the split if the function has no ``nice" convex part. 
Below, we give several illustrative examples of such splits for variational-Gaussian inference with $q(\vz|\vlambda) := \gauss(\vz|\vm,\vV)$, so that $\vlambda = \{\vm,\vV\}$ with $\vm$ being the mean and $\vV$ being the covariance matrix. 

{\bf Gaussian Process (GP) Models:} Consider GP models \citep{Kuss05} for $N$ input-output pairs $\{y_n,\vx_n\}$ indexed by $n$. Let $z_n := f(\vx_n)$ be the latent function drawn from a GP with mean 0 and covariance $\vK$. We use a non-Gaussian likelihood $p(y_n|z_n)$ to model the output. We can then use the following split, where the non-Gaussian terms are in $\tp_d$ and the Gaussian terms are in $\tp_e$:
\begin{align}
\frac{p(\vy,\vz)}{q(\vz|\vlambda)} = \underbrace{\prod_{n=1}^N p(y_n|z_n)}_{\tp_d(\mathbf{z}|\boldsymbol{\lambda})} \underbrace{\frac{\gauss(\vz|0,\vK)}{\gauss(\vz|\vm,\vV)}}_{\tp_e(\mathbf{z}|\boldsymbol{\lambda})}. \label{eq:glm_joint}
\end{align}
The detailed derivation is in the appendix. By substituting in \eqref{eq:LB}, we obtain the lower bound $\elbofinal(\vlambda)$ shown below along with its split:
\begin{align}
\underbrace{ \sum_n \mathbb{E}_{q}[\log p(y_n|z_n)] }_{-f(\boldsymbol{\lambda})} - \underbrace{ \dkls{}{\gauss(\vz|\vm,\vV)}{\gauss(\vz|0,\vK)}}_{h(\boldsymbol{\lambda})}. \label{eq:gp_lb}
\end{align}
A1 is satisfied for common likelihoods, while it is easy to establish that $h$ is convex. We show in Section \ref{sec:example_of_pgsvi} that this split leads to a closed-form update for iteration \eqref{eq:subproblem}.

{\bf Generalized Linear Models (GLMs):} A similar split can be obtained for GLMs \citep{nelder1972generalized}, where the non-conjugate terms are in $\tp_d$ and the rest are in $\tp_e$. Denoting the weights by $\vz$ and assuming a standard Gaussian prior over it, we can use the following split:
\begin{align*}
\frac{p(\vy,\vz)}{q(\vz|\vlambda)} = \underbrace{\prod_{n=1}^N p(y_n|\vx_n^T\vz)}_{\tp_d(\mathbf{z}|\boldsymbol{\lambda})} \underbrace{\frac{\gauss(\vz|0,\vI)}{\gauss(\vz|\vm,\vV)}}_{\tp_e(\mathbf{z}|\boldsymbol{\lambda})}.
\end{align*}
We give further details about the bound for this case in the appendix.

{\bf Correlated Topic Model (CTM):} Given a text document with a vocabulary of $N$ words, denote its word-count vector by $\vy$. Let $K$ be the number of topics and $\vz$ be the vector of topic-proportions. We can then use the following split:
\begin{align*}
\frac{p(\vy,\vz)}{q(\vz|\vlambda)} = \underbrace{\prod_{n=1}^N \sqr{\sum_{k=1}^K \beta_{n,k} \frac{e^{z_k}}{\sum_j e^{z_j}}}^{y_{n}} }_{\tp_d(\mathbf{z}|\boldsymbol{\lambda})} \underbrace{\frac{\gauss(\vz|\vmu,\vSigma)}{\gauss(\vz|\vm,\vV)}}_{\tp_e(\mathbf{z}|\boldsymbol{\lambda})},
\end{align*}
where $\vmu,\vSigma$ are parameters of the Gaussian prior and $\beta_{n,k}$ are parameters of $K$ multinomials. 
We give further details about the bound in the appendix.


\subsection{STOCHASTIC-APPROXIMATION} \label{sec:stoch_approx}
The approach of~\citet{Khan15nips} considers~\eqref{eq:subproblem} in the special case of~\eqref{eq:khan2015} where we use the exact gradient $\nabla f(\vlambda_k)$ in the first term. But in practice this gradient is often difficult to compute. In our framework, we allow a stochastic approximation of $\nabla f(\vlambda)$ which we denote by $\widehat{\nabla} f(\vlambda_k)$. 

As shown in the previous section, $f$ might take a form $f(\vlambda) := \Sigma_{n=1}^N \myexpect_{q} [\tilde{f}_n(\vz)]$ for a set of functions $\tilde{f}_n$ as in the  GP model~\eqref{eq:gp_lb}.
In some situations, $\myexpect_q[\tilde{f}_n(\vz)]$ is computationally expensive or intractable. For example, in GP models the expectation is equal to $\myexpect_q[\log p(y_n|z_n)]$, which is intractable for most non-Gaussian likelihoods. In such cases, we can form a stochastic approximation by using a few samples $\vz^{(s)}$ from $q(\vz|\vlambda)$, as shown below: 
\[
 \nabla  \myexpect_{q} [\tilde{f}_n(\vz)]  \approx
\widehat{\vg}(\vlambda, \vxi_n) := \frac 1 S \sum_{s=1}^S  \tilde{f}_n(\vz^{(s)})\nabla [\log q(\vz^{(s)} | \vlambda)]
\]
where $\vxi_n$ represents the noise in the stochastic approximation $\widehat{\vg}$ and we use the identity $\nabla q(\vz|\vlambda) = q(\vz|\vlambda) \nabla [\log q(\vz|\vlambda)]$ to derive the expression \citep{ranganath2013black}.
We can then form a stochastic-gradient by randomly selecting a mini-batch of $M$ functions $\tilde{f}_{n_i}(\vz)$ and employing the  estimate
\begin{align}
\widehat{\nabla} f(\vlambda) = \frac N M \sum_{i=1}^{M} \widehat{\vg}(\vlambda,\vxi_{n_i}). \label{eq:gradient_approx}
\end{align}
In our analysis we make the following two assumptions regarding the stochastic approximation of the gradient:
\begin{description}
\item[(A3)] The estimate is unbiased: $\myexpect[ \widehat{\vg}(\vlambda,\vxi_{n}) ] = \bigtriangledown f(\vlambda)$.
\item[(A4)] Its variance is upper bounded: $\textrm{Var}[\widehat{\vg}(\vlambda,\vxi_{n})] \le \sigma^2$.
\end{description}
In both the assumptions, the expectation is taken with respect to the noise $\vxi_{n}$. The first assumption is true for the stochastic approximations of \eqref{eq:gradient_approx}. The second assumption is stronger, but only needs to hold for all $\vlambda_k$ so is almost always satisfied in practice. 

\subsection{DIVERGENCE FUNCTIONS}
To incorporate the geometry of $q$ we incorporate a divergence function $\mathbb{D}$ between $\vlambda$ and $\vlambda_k$. The set of divergence functions need to satisfy  two assumptions:
\begin{description}
\item[(A5)] $\mathbb{D}(\vlambda\, \|\, \vlambda') > 0$, for all $\vlambda \ne \vlambda'$.
\item[(A6)] There exist an $\alpha>0$ such that for all $\vlambda, \vlambda'$ generated by \eqref{eq:subproblem} we have:
\begin{align}
(\vlambda - \vlambda')^T \nabla_{\boldsymbol{\lambda}} \mathbb{D}(\vlambda\, \|\, \vlambda') \ge \alpha \|\vlambda - \vlambda'\|^2. \label{eq:A4}
\end{align}
\end{description}
The first assumption is reasonable and is satisfied by typical divergence functions like the squared Euclidean distance and variants of the KL divergence. In the next section we show that, whenever the iteration~\eqref{eq:subproblem} is defined and all $\vlambda_k$ stay within a compact set, the second assumption is satisfied for all divergence functions considered in Section~\ref{sec:variational_inference}.

\section{SPECIAL CASES}

Most methods discussed in Section \ref{sec:variational_inference} are special cases of the proposed iteration~\eqref{eq:subproblem}. 
We obtain gradient descent if $h = 0$, $f = -\elbofinal$ , $\widehat{\nabla} f = \nabla f$, and $\mathbb{D}(\vlambda \| \vlambda_k) = (1/2)\|\vlambda - \vlambda_k\|^2$ (in this case A6 is satisfied with $\alpha = 1$). From here, there are three standard generalizations in the optimization literature: SG methods  do not require that $\widehat{\nabla} f = \nabla f$, proximal-gradient methods do not require that $h = 0$, and mirror descent allows $\mathbb{D}$ to be a different Bregman divergence generated by a strongly-convex function. Our analysis applies to all these variations on existing optimization algorithms because A1 to A5 are standard assumptions~\citep{ghadimi2014mini} and, as we now show, A6 is satisfied for this class of Bregman divergences. In particular, consider the generic Bregman divergence shown in the left side of~\eqref{eq:breg}
for some strongly-convex function $A(\vlambda)$.
By taking the gradient with respect to $\vlambda$ and substituting in \eqref{eq:A4}, we obtain that A6 is equivalent to
\[
(\vlambda -\vlambda_k)^T [ \bigtriangledown A(\vlambda) - \bigtriangledown A(\vlambda_k) ] \ge \alpha \|\vlambda - \vlambda_k\|^2,
\]
which is equivalent to strong-convexity of the function $A(\vlambda)$~\citep[][Theorem~2.1.9]{Nes04b}.


The method of~\citet{theis2015trust} corresponds to choosing $h = -\elbofinal$, $f = 0$, and $\mathbb{D}(\vlambda || \vlambda_k) := \dkls{}{q(\vz|\vlambda)}{q(\vz|\vlambda_k)}$ where $q$ is an exponential family distribution with natural parameters $\vlambda$. Since we assume $h$ to be convex, only limited cases of their approach are covered under our framework. The method of~\citet{Khan15nips} also uses the KL divergence and focuses on the deterministic case where $\widehat{\nabla} f(\vlambda) = \nabla f(\vlambda)$, but uses the split $-\elbofinal = f + h$ to allow for non-conjugate models.
In both of these models, A6 is satisfied when the Fisher matrix $\bigtriangledown^2 A(\vlambda)$ is positive-definite. This can be shown by using the definition of the KL divergence for exponential families \citep{nielsen2009statistical}:
\begin{equation}
\begin{aligned}
&\dkls{}{q(\vz|\vlambda)}{q(\vz|\vlambda_k)} \\
&\quad := A(\vlambda_k) - A(\vlambda) - [\bigtriangledown A(\vlambda)]^T (\vlambda_k - \vlambda). \label{eq:kl_exp}
\end{aligned}
\end{equation}
Taking the derivative with respect to $\vlambda$ and substituting in \eqref{eq:A4} with $\vlambda' = \vlambda_k$, we get the condition
\begin{align*}
(\vlambda - \vlambda_k)^T [\bigtriangledown^2 A(\vlambda)] (\vlambda - \vlambda_k) \ge \alpha \|\vlambda - \vlambda_k\|^2,
\end{align*}
which is satisfied when $\bigtriangledown^2 A(\vlambda)$ is positive-definite over a compact set for $\alpha$ equal to its lowest eigenvalue on the set.

Methods based on natural-gradient using iteration \eqref{eq:naturalgradient} (like SVI) correspond to using $h=0$, $f=-\elbofinal$, and the symmetric KL divergence. Assumption A1 to A5 are usually assumed for these methods and, as we show next, A6 is also satisfied. In particular, when $q$ is an exponential family distribution the symmetric KL divergence can be written as the sum of the Bregman divergence shown in \eqref{eq:breg} and the KL divergence shown in \eqref{eq:kl_exp}, 
\begin{align*}
&\dkls{sym}{q(\vz|\vlambda)}{q(\vz|\vlambda_k)} \nonumber\\
&:= \dkls{}{q(\vz|\vlambda_k)}{q(\vz|\vlambda)} + \dkls{}{q(\vz|\vlambda)}{q(\vz|\vlambda_k)} \nonumber\\
&= \mathbb{D}_A(\vlambda\| \vlambda_k) + \dkls{}{q(\vz|\vlambda)}{q(\vz|\vlambda_k)}
\end{align*}
where the first equality follows from the definition of the symmetric KL divergence and the second one follows from \eqref{eq:breg}. Since the two divergences in the sum satisfy A6, the symmetric KL divergence also satisfies the assumption.

\section{CONVERGENCE OF PG-SVI}

We first analyze the convergence rate of deterministic methods where the gradient is exact, $\widehat{\nabla} f(\vlambda) = \nabla f(\vlambda)$. This yields a simplified result that applies to a wide variety of existing variational methods. Subsequently, we consider the more general case where a stochastic approximation of the gradient is used.

\subsection{DETERMINISTIC METHODS}
We first establish the convergence under a fixed step-size when using the exact gradient. We use $C_0 = \elbofinal^* - \elbofinal(\vlambda_0)$ as the initial (constant) sub-optimality, and express our result in terms of the quantity 
\[
G_k := \frac{1}{\beta} (\vlambda_k -\vlambda_{k+1}),
\]
where $\vlambda_{k+1}$ is computed using~\eqref{eq:subproblem}.
%
\begin{prop} \label{corr:cnst_step}
Let A1, A2, A5, and A6 be satisfied. If we run $t$ iterations of~\eqref{eq:subproblem} with a fixed step-size $\beta_k = \beta = \alpha/L$ for all $k$ and an exact gradient $\nabla f(\vlambda)$, then we have
\begin{equation}
\label{eqtm21}
\begin{split}
& \min_{k \in \{0,1,\dots,t-1\}} \,  \norm{G_k}^2 \le \frac{2LC_0}{\alpha^2t} 
\end{split}
\end{equation}
\end{prop}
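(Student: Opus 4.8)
The plan is to establish a per-iteration \emph{sufficient-decrease} inequality for the objective $F := \primal + h = -\elbofinal$ and then telescope it. First I would record the first-order optimality condition for the subproblem~\eqref{eq:subproblem}: since $\vlambda_{k+1}$ minimizes $\vlambda^T \nabla \primal(\vlambda_k) + h(\vlambda) + \frac{1}{\beta}\mathbb{D}(\vlambda\|\vlambda_k)$ over $\mathcal{S}$, there is a subgradient $\vs \in \partial h(\vlambda_{k+1})$ so that the variational inequality
$$(\vlambda - \vlambda_{k+1})^T\left[\nabla \primal(\vlambda_k) + \vs + \tfrac{1}{\beta}\nabla_{\vlambda}\mathbb{D}(\vlambda_{k+1}\|\vlambda_k)\right] \ge 0$$
holds for every $\vlambda \in \mathcal{S}$. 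Evaluating this at the feasible point $\vlambda = \vlambda_k$ and invoking A6 with the pair $(\vlambda_{k+1},\vlambda_k)$ to lower-bound the divergence term by $\alpha\norm{\vlambda_{k+1}-\vlambda_k}^2$ yields the crucial cross-term estimate $(\vlambda_{k+1}-\vlambda_k)^T[\nabla\primal(\vlambda_k)+\vs] \le -\frac{\alpha}{\beta}\norm{\vlambda_{k+1}-\vlambda_k}^2$.

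Next I would bound $F(\vlambda_{k+1})$ from above. The Lipschitz assumption A1 gives the standard descent lemma $\primal(\vlambda_{k+1}) \le \primal(\vlambda_k) + \nabla\primal(\vlambda_k)^T(\vlambda_{k+1}-\vlambda_k) + \frac{L}{2}\norm{\vlambda_{k+1}-\vlambda_k}^2$, and convexity of $h$ (A2) gives $h(\vlambda_{k+1}) \le h(\vlambda_k) + \vs^T(\vlambda_{k+1}-\vlambda_k)$ for the same subgradient $\vs$. Adding these and substituting the cross-term estimate collapses the linear terms and leaves
$$F(\vlambda_{k+1}) \le F(\vlambda_k) - \left(\frac{\alpha}{\beta} - \frac{L}{2}\right)\norm{\vlambda_{k+1}-\vlambda_k}^2.$$
Plugging in the prescribed step-size $\beta = \alpha/L$ makes $\alpha/\beta = L$, so the bracketed constant becomes $L/2$; rewriting $\norm{\vlambda_{k+1}-\vlambda_k}^2 = \beta^2\norm{G_k}^2 = (\alpha^2/L^2)\norm{G_k}^2$ converts this into $F(\vlambda_{k+1}) \le F(\vlambda_k) - \frac{\alpha^2}{2L}\norm{G_k}^2$.

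Finally, I would telescope. Summing the decrease inequality over $k = 0,\dots,t-1$ and using $F(\vlambda_t) \ge F^* = -\elbofinal^*$ gives $\frac{\alpha^2}{2L}\sum_{k=0}^{t-1}\norm{G_k}^2 \le F(\vlambda_0) - F^* = \elbofinal^* - \elbofinal(\vlambda_0) = C_0$. Since the minimum over the iterates never exceeds their average, $\min_k \norm{G_k}^2 \le \frac{1}{t}\sum_{k=0}^{t-1}\norm{G_k}^2 \le \frac{2LC_0}{\alpha^2 t}$, which is exactly the claimed bound.

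The main obstacle I anticipate is the cross-term step. In the classical mirror-descent or proximal-gradient analysis one controls the divergence contribution using a three-point identity special to Bregman divergences; here $\mathbb{D}$ is assumed only to satisfy the weak monotonicity-type property A6 (together with positivity A5), so the argument must route entirely through A6 rather than any exact identity, and it must do so using the subgradient $\vs$ supplied by the optimality condition for the \emph{general} convex $h$. Making that optimality condition rigorous on the constrained set $\mathcal{S}$ — justifying the variational inequality and the existence of $\vs \in \partial h(\vlambda_{k+1})$ — is the delicate part; once the cross-term estimate is in hand, the remaining algebra and telescoping are routine.
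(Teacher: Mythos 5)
Your proposal is correct and follows essentially the same route as the paper's own proof: the paper's Lemma~1 packages exactly your cross-term estimate (the subproblem optimality condition evaluated at $\vlambda_k$, combined with convexity of $h$ and A6), which is then fed into the descent lemma for $f$ and telescoped to give $\min_k \norm{G_k}^2 \le C_0 / \sum_k (\alpha\beta_k - L\beta_k^2/2)$. The only cosmetic difference is that the paper first proves the variable step-size result (Proposition~2) and obtains Proposition~1 by substituting $\beta_k = \alpha/L$, whereas you work with the fixed step-size directly.
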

We give a proof in the appendix. 
Stating the result in terms of $G_k$ may appear to be unconventional, but this quantity is natural measure of first-order optimality. For example, consider the special case of gradient descent where $h = 0$ and $\mathbb{D}(\vlambda,\vlambda_k) = \frac{1}{2}\norm{\vlambda-\vlambda_k}^2$. In this case, $\alpha=1$ and $\beta_k = 1/L$, therefore we have  $\norm{G_k} = \norm{\nabla f(\vlambda_k)}$ and Proposition~\ref{corr:cnst_step} implies that $\min_{k}\norm{\nabla f(\vlambda_k)}^2$ has a convergence rate of $O(1/t)$. This means that the method converges at a sublinear rate to an approximate stationary point, which would be a global minimum in the special case where $f$ is convex. 

In more general settings, the quantity $G_k$ provides a generalized notion of first-order optimality for problems that may be non-smooth or use a non-Euclidean geometry. 
Further, if the objective is bounded below ($C_0$ is finite), this result implies that the algorithm converges to such a stationary point and also gives a rate of convergence of $O(1/t)$.


If we use a divergence with $\alpha>1$ then we can use a step-size larger than $1/L$ and the error will decrease faster than gradient-descent. 
To our knowledge, this is the first result that formally shows that natural-gradient methods can achieve faster convergence rates. The splitting of the objective into $f$ and $h$ functions is also likely to improve the step-size. Since $L$ only depends on $f$, sometimes it might be possible to reduce the Lipschitz constant by choosing an appropriate split.

We next give a more general result that allows a per-iteration step size.
\begin{prop} \label{thm:main}
If we choose the step-sizes $\beta_k$ to be such that $0<\beta_k\le 2\alpha/L$ with $\beta_k< 2\alpha/L$ for at least one $k$, then, 
\begin{equation}
\label{eqtm2}
\begin{split}
& \min_{k \in \{0,1\dots t-1\}} \, \norm{G_k}^2 \le \frac{C_0} {\sum_{k=0}^{t-1} \rnd{ \alpha \beta_k - L\beta_k^2/2}}
\end{split}
\end{equation}
\end{prop}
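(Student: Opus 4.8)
The plan is to establish a single per-iteration descent inequality for the objective $F := f + h = -\elbofinal$ and then telescope it, in direct parallel with the argument behind Proposition~\ref{corr:cnst_step} but keeping $\beta_k$ inside each term rather than fixing $\beta = \alpha/L$. First I would invoke the descent lemma implied by the $L$-Lipschitz gradient assumption A1, namely $f(\vlambda_{k+1}) \le f(\vlambda_k) + \nabla f(\vlambda_k)^T(\vlambda_{k+1} - \vlambda_k) + \tfrac{L}{2}\norm{\vlambda_{k+1}-\vlambda_k}^2$.

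Next I would write the first-order optimality condition for the subproblem~\eqref{eq:subproblem} in the deterministic case. Since $\vlambda_{k+1}$ minimizes it over $\mathcal{S}$, there is a subgradient $\vs_{k+1} \in \partial h(\vlambda_{k+1})$ with $(\vlambda - \vlambda_{k+1})^T[\nabla f(\vlambda_k) + \vs_{k+1} + \tfrac{1}{\beta_k}\nabla_{\vlambda}\mathbb{D}(\vlambda_{k+1}\|\vlambda_k)] \ge 0$ for all $\vlambda \in \mathcal{S}$. Evaluating at $\vlambda = \vlambda_k$ isolates $\nabla f(\vlambda_k)^T(\vlambda_{k+1}-\vlambda_k)$, which I substitute into the descent lemma. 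The divergence term is then controlled by A6 (with $\vlambda = \vlambda_{k+1}$, $\vlambda' = \vlambda_k$), giving $(\vlambda_k - \vlambda_{k+1})^T\nabla_{\vlambda}\mathbb{D}(\vlambda_{k+1}\|\vlambda_k) \le -\alpha\norm{\vlambda_{k+1}-\vlambda_k}^2$, and the subgradient term by convexity of $h$ (A2), $\vs_{k+1}^T(\vlambda_k - \vlambda_{k+1}) \le h(\vlambda_k) - h(\vlambda_{k+1})$. Collecting these yields the one-step bound
\[
F(\vlambda_{k+1}) \le F(\vlambda_k) - \rnd{\frac{\alpha}{\beta_k} - \frac{L}{2}}\norm{\vlambda_{k+1}-\vlambda_k}^2.
\]

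I would then substitute $\vlambda_k - \vlambda_{k+1} = \beta_k G_k$, so that $\norm{\vlambda_{k+1}-\vlambda_k}^2 = \beta_k^2\norm{G_k}^2$ and the one-step bound becomes $F(\vlambda_{k+1}) \le F(\vlambda_k) - (\alpha\beta_k - L\beta_k^2/2)\norm{G_k}^2$. Summing over $k = 0,\dots,t-1$ telescopes the left-hand side to $F(\vlambda_t) - F(\vlambda_0)$, and since $F(\vlambda_t) \ge F^* = -\elbofinal^*$ this gives $\sum_{k=0}^{t-1}(\alpha\beta_k - L\beta_k^2/2)\norm{G_k}^2 \le F(\vlambda_0) - F^* = C_0$. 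Finally, the hypothesis $0 < \beta_k \le 2\alpha/L$ makes every coefficient $\alpha\beta_k - L\beta_k^2/2 = \beta_k(\alpha - L\beta_k/2)$ nonnegative, while the strict inequality for at least one $k$ forces $\sum_k(\alpha\beta_k - L\beta_k^2/2) > 0$; bounding each $\norm{G_k}^2$ below by $\min_k\norm{G_k}^2$ yields $\min_k\norm{G_k}^2 \sum_k(\alpha\beta_k - L\beta_k^2/2) \le C_0$, which rearranges to~\eqref{eqtm2}.

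The main obstacle is the careful handling of the subproblem's first-order optimality condition when the divergence $\mathbb{D}(\cdot\|\vlambda_k)$ need not be convex in its first argument and the feasible set $\mathcal{S}$ may be active: one must argue that the variational inequality above still holds at the minimizer as a necessary condition requiring only differentiability of $\mathbb{D}$ and existence of $\vlambda_{k+1}$ (which A5 together with the assumption that the iterates remain in a compact set underwrites), and that A6 applies precisely to the gradient $\nabla_{\vlambda}\mathbb{D}(\vlambda_{k+1}\|\vlambda_k)$ appearing there. Everything after the one-step inequality is routine telescoping together with the standard min-versus-weighted-average bound; the only change from Proposition~\ref{corr:cnst_step} is that the decrease coefficient stays $\beta_k$-dependent and must be kept nonnegative via $\beta_k \le 2\alpha/L$.
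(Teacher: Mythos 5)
Your proposal is correct and follows essentially the same route as the paper's own proof: the paper packages your first-order optimality step (subgradient of $h$ plus A6 applied to $\nabla_{\vlambda}\mathbb{D}(\vlambda_{k+1}\|\vlambda_k)$) into its Lemma~\ref{lemma:1} in terms of the proximal map $\mathcal{P}(\vlambda_k,\vg,\beta)$, then combines it with the descent lemma, telescopes, and bounds the minimum by the weighted average exactly as you do. The one-step inequality you derive, $F(\vlambda_{k+1}) \le F(\vlambda_k) - (\alpha\beta_k - L\beta_k^2/2)\norm{G_k}^2$, is precisely the paper's key inequality, so no gap remains.
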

We give a proof in the appendix. For gradient-descent, the above result implies that we can use any step-size less than $2/L$, which agrees with the classical step-size choices for gradient and proximal-gradient methods. 

\subsection{STOCHASTIC METHODS}
We now give a bound for the more general case where we use a stochastic approximation of the gradient.
\begin{prop} \label{corr:cnst_step_stochastic}
Let A1-A6 be satisfied. If we run $t$ iterations of~\eqref{eq:subproblem} for a fixed step-size $\beta_k = \alpha_*/L$ (where $0<\gamma<2$ is a scalar) and fixed batch-size $M_k = M$ for all $k$ with a stochastic gradient $\widehat{\nabla} f(\vlambda)$, then we have 
\[
\myexpect_{R,\boldsymbol \xi} (\norm{G_R}^2) \leq  \sqr{\frac{2LC_0}{\alpha_*^2 t}  + \frac{ c\sigma^2}{M\alpha^*}}.
\]
where $c$ is a constant such that $c > 1/(2\alpha)$ and $\alpha_* := \alpha - 1/(2c)$. The expectation is taken with respect to the noise $\vxi := \{\vxi_0,\vxi_1,\ldots,\vxi_{t-1}\}$, and a random variable $R$ which follows the uniform distribution $Prob(R=k) = 1/t, \forall k\in \{0,1,2,\ldots,t-1\}$.
\end{prop}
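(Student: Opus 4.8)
The plan is to work with the composite objective $F := f + h = -\elbofinal$ and to derive a single per-iteration descent inequality relating the decrease $F(\vlambda_{k+1}) - F(\vlambda_k)$ to $\norm{G_k}^2$ plus a noise term, then to telescope the inequality over $k = 0,\dots,t-1$ and reinterpret the resulting average through the randomized index $R$. The starting point is the first-order optimality (variational inequality) of the subproblem~\eqref{eq:subproblem}: since $\vlambda_{k+1}$ minimizes $\vlambda^T \widehat{\nabla} f(\vlambda_k) + h(\vlambda) + \tfrac{1}{\beta}\mathbb{D}(\vlambda\,\|\,\vlambda_k)$ over $\mathcal{S}$, there is a subgradient $\vs \in \partial h(\vlambda_{k+1})$ such that, evaluating the inequality at the feasible point $\vlambda_k$, we get $\langle \widehat{\nabla} f(\vlambda_k) + \vs + \tfrac{1}{\beta}\nabla_{\boldsymbol\lambda}\mathbb{D}(\vlambda_{k+1}\,\|\,\vlambda_k),\, \vlambda_k - \vlambda_{k+1}\rangle \ge 0$. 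Applying \textbf{(A6)} in the form~\eqref{eq:A4} with $\vlambda = \vlambda_{k+1}$, $\vlambda' = \vlambda_k$ then yields
\[
\langle \widehat{\nabla} f(\vlambda_k) + \vs,\, \vlambda_{k+1} - \vlambda_k\rangle \le -\frac{\alpha}{\beta}\norm{\vlambda_{k+1}-\vlambda_k}^2.
\]

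Next I would assemble the descent inequality. The $L$-smoothness of $f$ from \textbf{(A1)} gives $f(\vlambda_{k+1}) \le f(\vlambda_k) + \langle \nabla f(\vlambda_k), \vlambda_{k+1}-\vlambda_k\rangle + \tfrac{L}{2}\norm{\vlambda_{k+1}-\vlambda_k}^2$, while convexity of $h$ from \textbf{(A2)} gives $h(\vlambda_{k+1}) - h(\vlambda_k) \le \langle \vs, \vlambda_{k+1}-\vlambda_k\rangle$. Adding these and writing the exact gradient as $\nabla f(\vlambda_k) = \widehat{\nabla} f(\vlambda_k) - e_k$, where $e_k := \widehat{\nabla} f(\vlambda_k) - \nabla f(\vlambda_k)$ is the stochastic error, the optimality bound above controls the $\widehat{\nabla} f(\vlambda_k)+\vs$ piece and I obtain, using $\vlambda_{k+1}-\vlambda_k = -\beta G_k$,
\[
F(\vlambda_{k+1}) - F(\vlambda_k) \le -\alpha\beta\norm{G_k}^2 + \frac{L\beta^2}{2}\norm{G_k}^2 + \beta\langle e_k, G_k\rangle.
\]

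The distinctively stochastic step is handling the cross term $\beta\langle e_k, G_k\rangle$: because $G_k$ depends on $e_k$ it does not vanish in expectation, so I would bound it by Young's inequality, $\beta\langle e_k, G_k\rangle \le \tfrac{\beta}{2c}\norm{G_k}^2 + \tfrac{\beta c}{2}\norm{e_k}^2$, which transfers exactly $\tfrac{1}{2c}$ into the coefficient of $\norm{G_k}^2$ and produces the constant $\alpha_* = \alpha - \tfrac{1}{2c}$ (positive precisely when $c > 1/(2\alpha)$). Substituting the fixed step $\beta = \alpha_*/L$ collapses the $\norm{G_k}^2$ coefficient to $-\alpha_*^2/(2L)$, leaving $F(\vlambda_{k+1}) - F(\vlambda_k) \le -\tfrac{\alpha_*^2}{2L}\norm{G_k}^2 + \tfrac{\beta c}{2}\norm{e_k}^2$. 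Summing over $k$, the left side telescopes to $F(\vlambda_t) - F(\vlambda_0) \ge -C_0$ since $F = -\elbofinal$ and $\elbofinal(\vlambda_t) \le \elbofinal^*$; taking expectation over $\vxi$ and invoking \textbf{(A3)}--\textbf{(A4)} (unbiasedness makes $e_k$ mean-zero so $\myexpect\norm{e_k}^2$ is its variance, and mini-batch averaging over $M$ samples gives $\myexpect\norm{e_k}^2 \le \sigma^2/M$) bounds the noise sum by $t\sigma^2/M$. Dividing by $\tfrac{\alpha_*^2 t}{2L}$ and identifying $\tfrac1t\sum_k \myexpect\norm{G_k}^2 = \myexpect_{R,\boldsymbol\xi}\norm{G_R}^2$ for the uniform index $R$ delivers the claimed bound.

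I expect the main obstacle to be the first two steps: correctly extracting $\norm{G_k}^2$ from the interaction between the subproblem optimality condition and \textbf{(A6)} while keeping the subgradient of $h$ under control over the constrained set $\mathcal{S}$ (as opposed to the unconstrained/interior case), since it is the coupling of the divergence geometry with the composite structure that is nonstandard. The rest is a routine telescoping argument; the noise term is handled cleanly by the Young's-inequality device, and setting $\sigma = 0$ (equivalently $c \to \infty$, $\alpha_* \to \alpha$) recovers the deterministic Proposition~\ref{corr:cnst_step} as a sanity check.
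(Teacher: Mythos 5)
Your proof is correct and takes essentially the same route as the paper's: your optimality-condition-plus-A6 step is exactly the paper's Lemma~1 (with the subgradient of $h$ kept explicit rather than folded into function values), and the smoothness bound, error split $\nabla f = \widehat{\nabla} f - e_k$, Young's inequality producing $\alpha_* = \alpha - 1/(2c)$, telescoping against $C_0$, and the A3/A4 variance bound $\sigma^2/M$ match the paper's argument step for step. The only cosmetic difference is that you fix $\beta = \alpha_*/L$ from the outset so that $R$ is uniform throughout, whereas the paper first proves a variable-step-size theorem in which $R$ is drawn with probability proportional to $\alpha_*\beta_k - L\beta_k^2/2$ and then specializes to the constant-step case.
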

Unlike the bound of Proposition~\ref{corr:cnst_step}, this bound depends on the noise variance $\sigma^2$ as well the mini-batch size $M$. In particular, as we would expect, the bound gets tighter as the variance gets smaller and as the size of our mini-batch grows. Notice that the dependence on the variance $\sigma^2$ is also improved if we have a favourable geometry that increases $\alpha^*$. Thus, we can achieve a higher accuracy by either increasing the mini-batch size or improving the geometry.
In the appendix we give a more general result that allows non-constant sequences of step sizes, 
although we found that constant step-sizes work better empirically.
Note that while stating the result in terms of a randomized iteration might seem strange, in practice we typically just take the last iteration as the approximate minimizer.

\section{CLOSED-FORM UPDATES FOR NON-CONJUGATE MODELS} \label{sec:example_of_pgsvi}

We now give an example where iteration \eqref{eq:subproblem} attains a closed-form solution. We expect such closed-form solution to exist for a large class of problems, including models where $q$ is an exponential-family distribution, but here we focus on the GP model discussed in Section \ref{sec:splitting}.



For the GP model, we rewrite the lower bound \eqref{eq:gp_lb} as
\begin{align}
-\elbofinal(\vm,\vV) := \underbrace{\sum_{n=1}^N f_n(m_n,v_n)}_{f(\boldsymbol{m},\boldsymbol{V})} + \underbrace{\dkls{}{q}{p}}_{h(\boldsymbol{m},\boldsymbol{V})} \label{eq:gp_lb_1}
\end{align}
where we've used  $q:=\gauss(\vz|\vm,\vV)$, $p:=\gauss(\vz|0,\vK)$, and $f_n(m_n,v_{n}):= -\mathbb{E}_{q}[\log p(y_n|z_n)]$ with $m_n$ being the  entry $n$ of $\vm$ and $v_n$ being the diagonal entry $n$ of $\vV$. We can compute a stochastic approximation of $f$ using \eqref{eq:gradient_approx} by randomly selecting an example $n_k$ (choosing $M=1$) and using a Monte Carlo gradient approximation of $f_{n_k}$. Using this approximation, the linearized term in \eqref{eq:subproblem} can be simplified to the following:
\begin{align}
\vlambda^T \sqr{ \widehat{\bigtriangledown} f(\vlambda_k)}  \nonumber 
&=  m_n \underbrace{N [ \nabla_{m_n} f_{n_k}(m_{n_k,k}, v_{n_k,k}) ]}_{:= \alpha_{n_k,k}} \nonumber \\
&\quad + v_{n} \underbrace{N [\nabla_{v_{n}} f_{n_k}(m_{n_k,k}, v_{n_k,k}) ]}_{ := 2\,\gamma_{n_k,k}} \nonumber\\
&= m_n \alpha_{n_k,k} + \half v_{n} \gamma_{n_k,k}
\end{align}
where $m_{n_k,k}$ and $v_{n_k,k}$ denote the value of $m_n$ and $v_n$ in the $k$'th iteration for $n=n_k$. 
By using the KL divergence as our divergence function in iteration \eqref{eq:subproblem}, and by denoting $\gauss(\vz|\vm_k,\vV_k)$ by $q_k$, we can express the two last two terms in \eqref{eq:subproblem} as a single KL divergence function as shown below:
\begin{equation*}
\begin{split}
&\vlambda^T \sqr{ \widehat{\bigtriangledown} f(\vlambda_k)} + h(\vlambda) + \frac{1}{\beta_k} \mathbb{D}(\vlambda\|\vlambda_k),\\
&= (m_n \alpha_{n,k} + \half v_{n} \gamma_{n,k}) + \dkls{}{q}{p} + \frac{1}{\beta_k} \dkls{}{q}{q_k}, \\
&= (m_n \alpha_{n,k} + \half v_{n} \gamma_{n,k}) + \frac{1}{1-r_k} \dkls{}{q}{p^{1-r_k} q_k^{r_k}},
\end{split}
\end{equation*}
where $r_k := 1/(1+\beta_k)$.
Comparing this to \eqref{eq:gp_lb_1}, we see that this objective is similar to that of a GP model with a Gaussian prior\footnote{Since $p$ and $q$ are Gaussian, the product is a Gaussian.} $p^{1-r_k} q_k^{r_k}$ and a linear Gaussian-like log-likelihood. Therefore, we can obtain closed-form updates for its minimization. 

The updates are shown below and a detailed derivation is given in the appendix.
\begin{align}
&\tvgamma_k = r_k \tvgamma_{k-1} + (1-r_k) \gamma_{n_k,k} \vone_{n_k} , \nonumber \\
&\vm_{k+1} = \vm_k - (1-r_k) (\vI - \vK\vA_k^{-1}) (\vm_k + \alpha_{n_k,k}\boldsymbol{\kappa}_{n_k}) , \nonumber\\
&v_{n_{k+1}, k+1} = \kappa_{n_{k+1},n_{k+1}} - \boldsymbol{\kappa}_{n_{k+1}}^T \vA_k^{-1} \boldsymbol{\kappa}_{n_{k+1}},  \label{eq:effupdate1}
\end{align}
where $\tvgamma_0$ is initialized to a small positive constant to avoid numerical issues, $\vone_{n_k}$ is a vector with all zero entries except $n_k$'th entry which is equal to 1, $\boldsymbol{\kappa}_k$ is $n_k$'th column of $\vK$, and $\vA_k := \vK + [\diag(\tvgamma_k)]^{-1}$.  
For iteration $k+1$, we use $m_{n_{k+1},k+1}$ and $v_{n_{k+1}, k+1}$ to compute the gradients $\alpha_{n_{k+1},k+1}$ and $\gamma_{n_{k+1},k+1}$, and run the above updates again. We continue until a convergence criteria is reached.

There are numerous advantages of these updates. First, We do not need to store the full covariance matrix $\vV$.
The updates avoid forming the matrix and only update $\vm$. This works because we only need one diagonal element in each iteration to compute the stochastic gradient $\gamma_{n_k,k}$.
For large $N$ this is a clear advantage since the memory cost is $O(N)$ rather than $O(N^2)$.
Second, computation of the mean vector $\vm$ and a diagonal entry of $\vV$ only require solving two linear equations, as shown in the second and third line of \eqref{eq:effupdate1}. In general, for a mini-batch of size $M$, we need a total of $2 M$ linear equations, which is a lot cheaper than an explicit inversion.
Finally, the linear equations at iteration $k+1$ are very similar to those at iteration $k$, since $\vA_k$ differ only at one entry from $\vA_{k+1}$. Therefore, we can reuse computations from the previous iteration to improve the computational efficiency of the updates.

\section{EXPERIMENTAL RESULTS}
In this section, we compare our method to many existing approaches such as SGD and four adaptive gradient-methods (ADAGRAD, ADADELTA, RMSprop, ADAM), as well as two variational inference methods for non-conjugate models (the delta method and Laplace method). We show results on Gaussian process classification \citep{Kuss05} and correlated topic models \citep{blei2007correlated}. The code to reproduce these experiments can be found on GitHub.\footnote{{\url{https://github.com/emtiyaz/prox-grad-svi}}}

\subsection{GAUSSIAN PROCESS CLASSIFICATION}
\begin{figure}[!t]
\center
\includegraphics[width=3.1in]{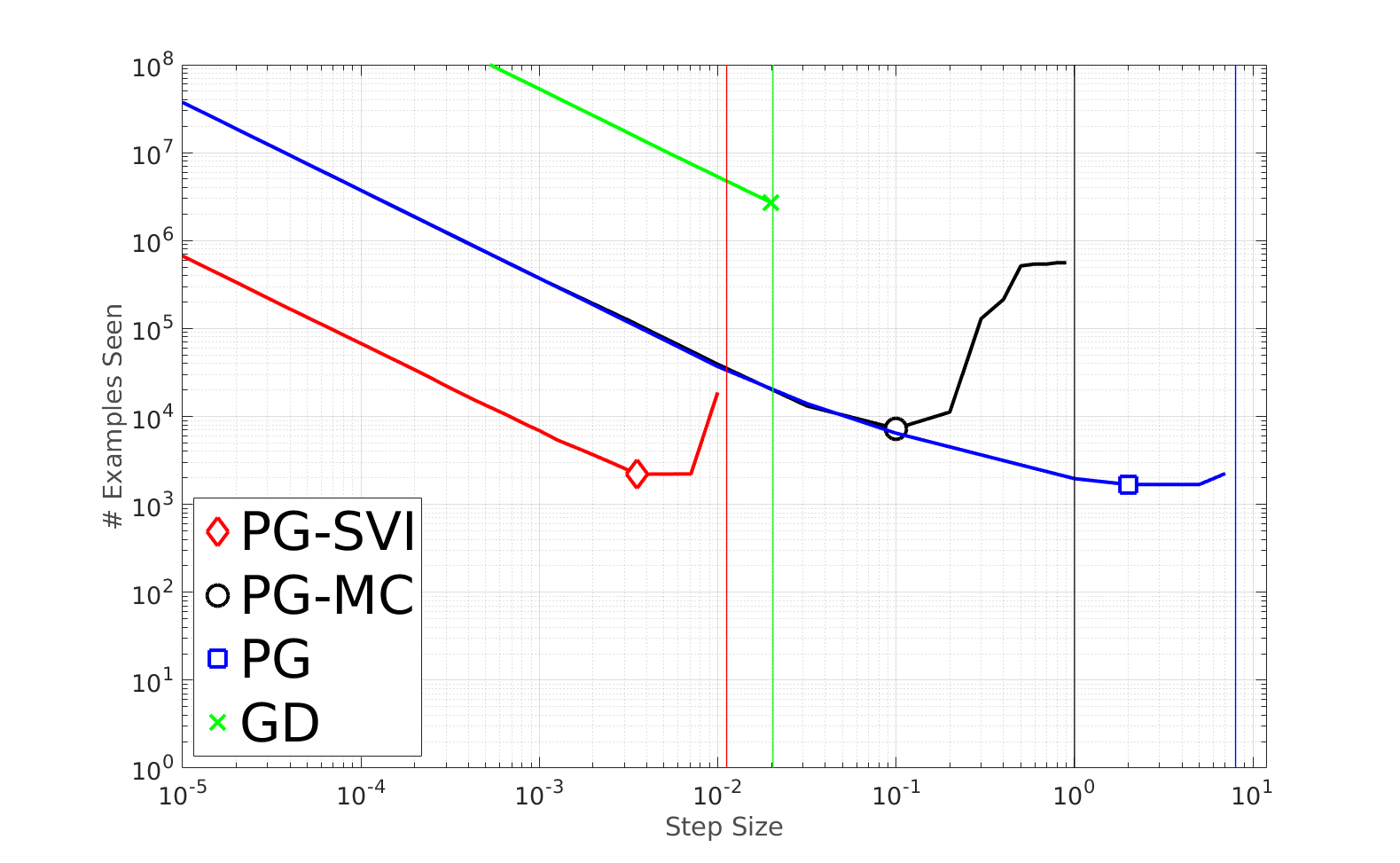} 
\caption{The number of examples required for convergence versus the step-size for binary GP classification for differnet methods. 
 The vertical lines show the step-size above which a method diverges.
}
\label{fig:step_size_gp}
\end{figure}

\begin{figure*}[!t]
\center
\subfigure{\includegraphics[width=2.2in]{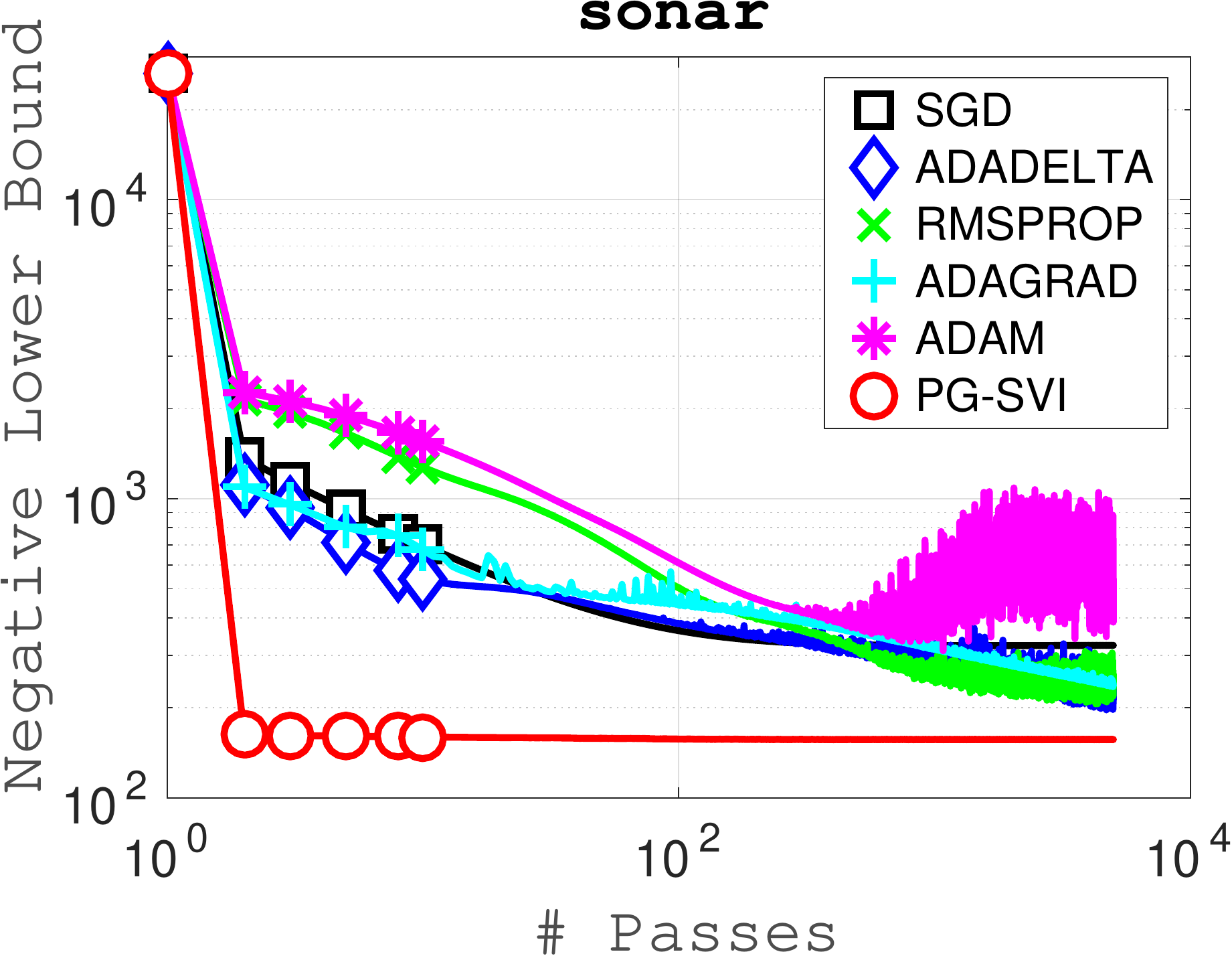}}
\hfill
\subfigure{\includegraphics[width=2.2in]{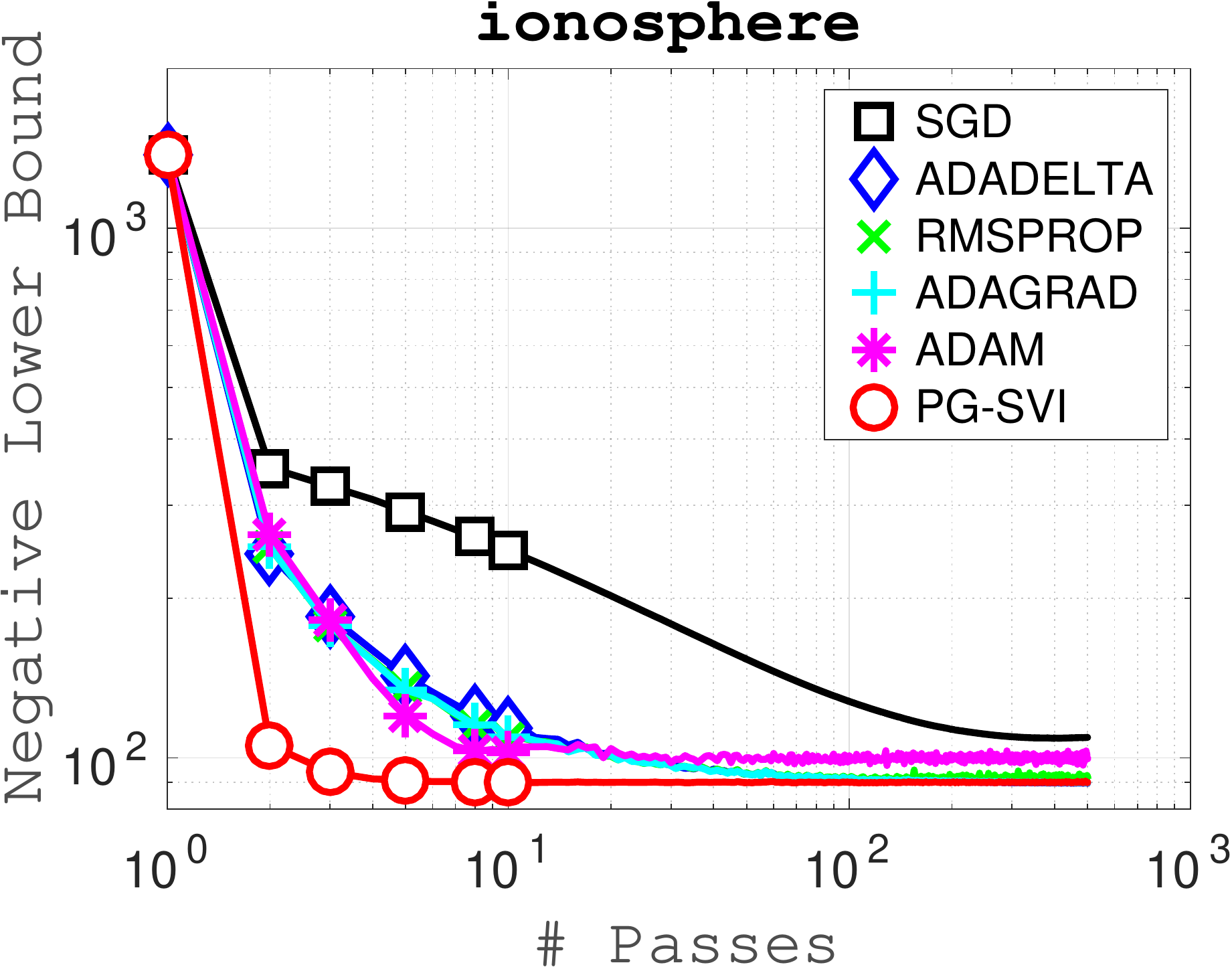}}
\hfill
\subfigure{\includegraphics[width=2.2in]{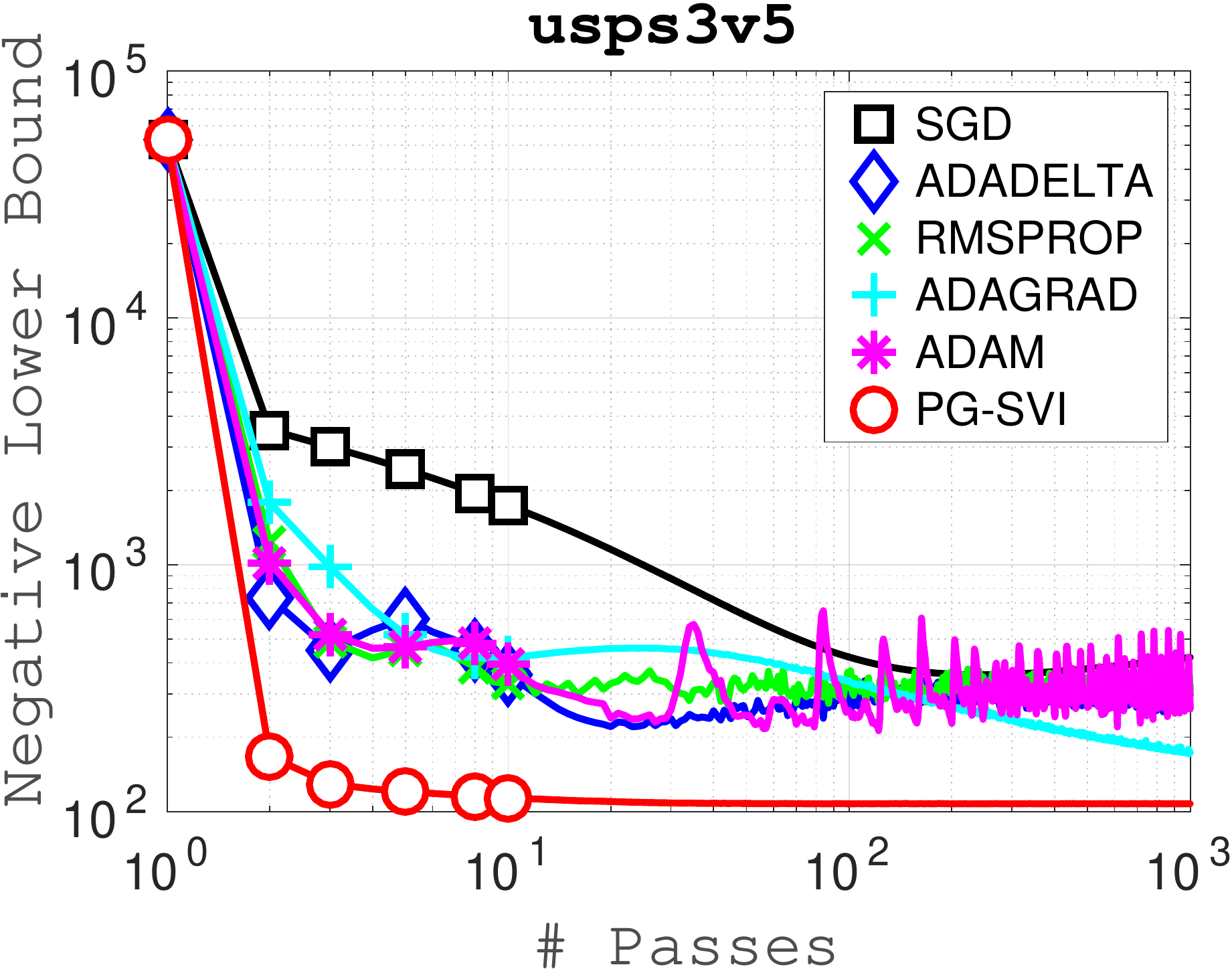}}
\hfill
\subfigure{\includegraphics[width=2.2in]{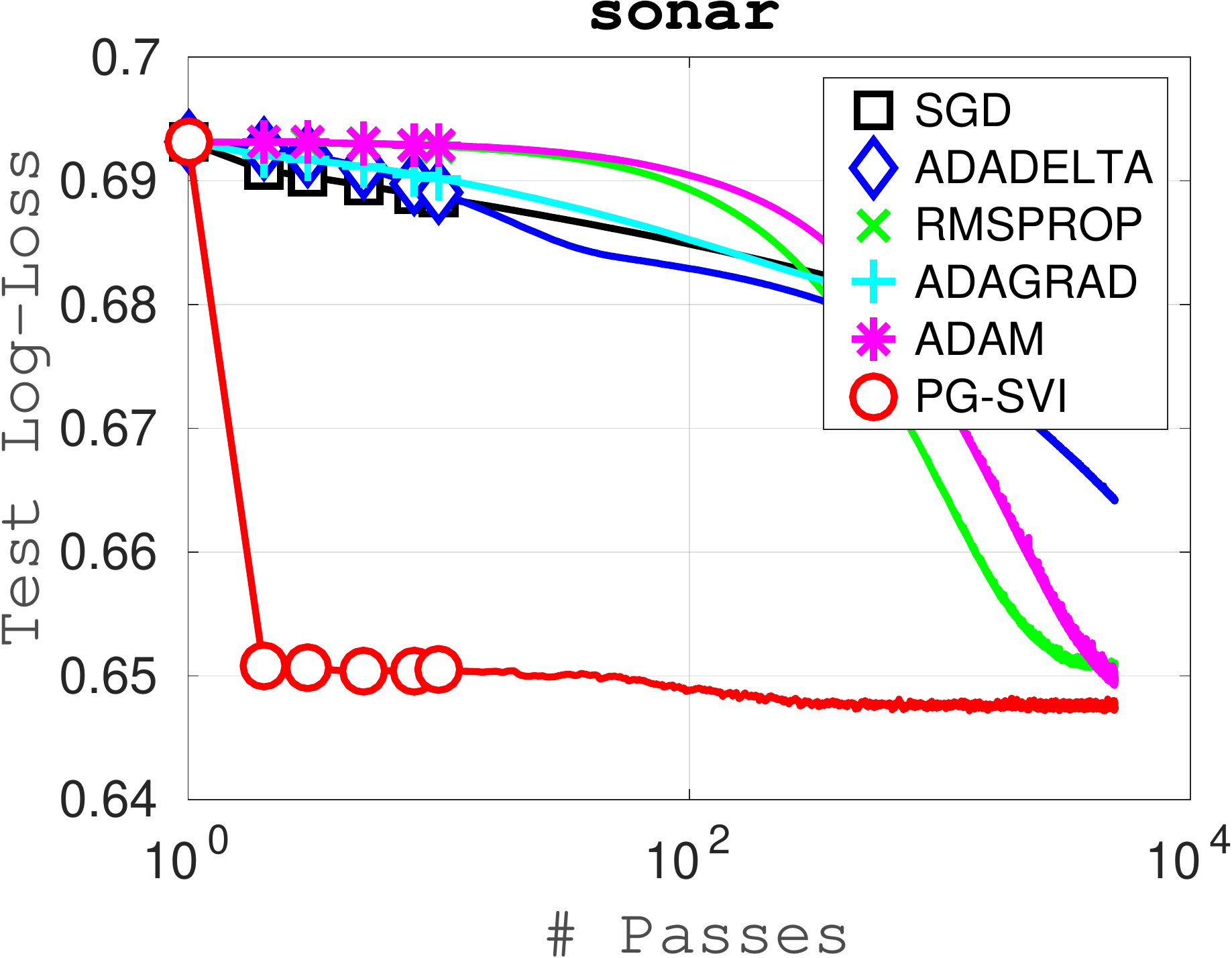}}
\hfill
\subfigure{\includegraphics[width=2.2in]{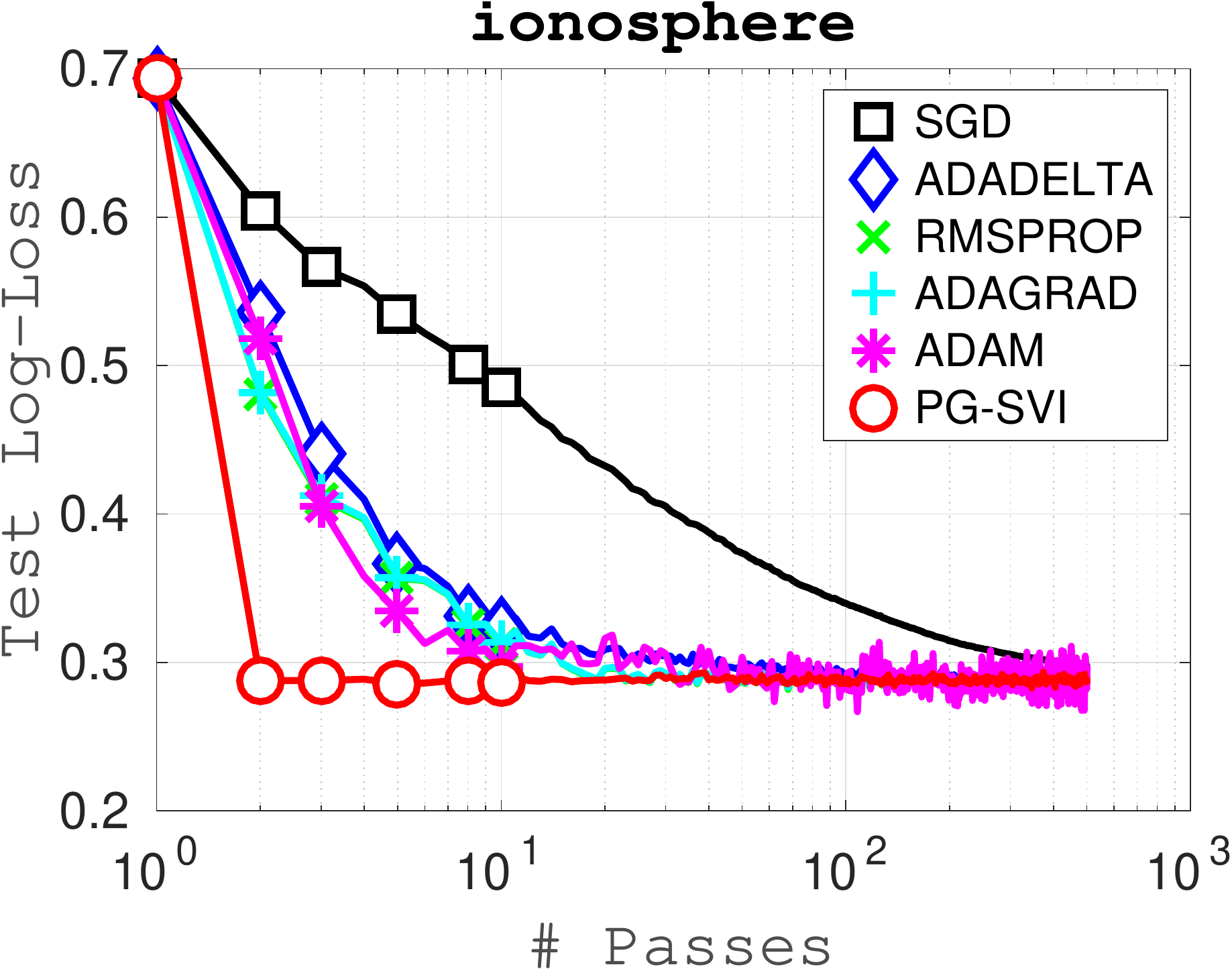}}
\hfill
\subfigure{\includegraphics[width=2.2in]{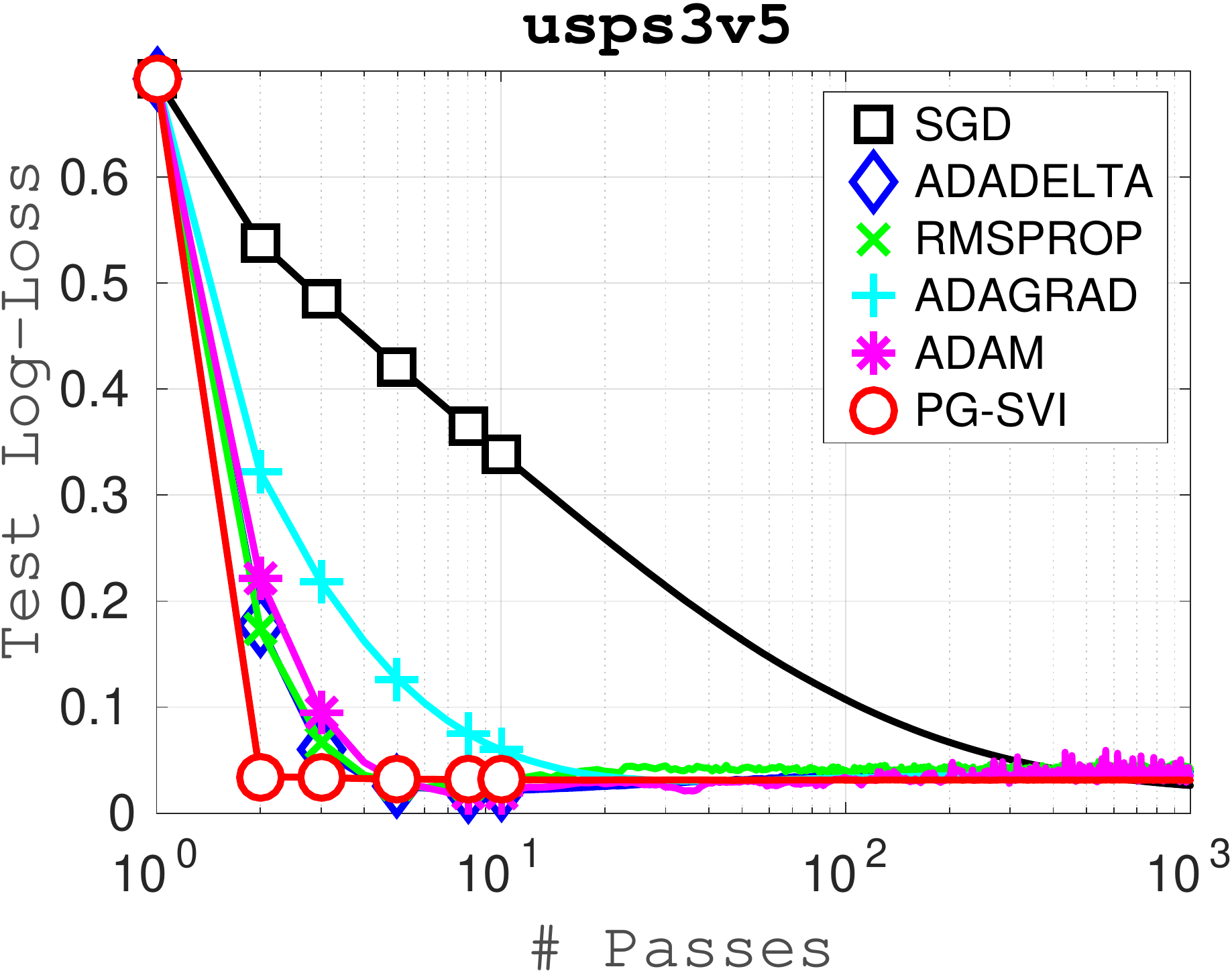}}
\caption{Comparison of different stochastic gradient methods for binary classification using GPs. 
Each column shows results for a dataset. The top row shows the negative of the lower bound while the bottom row shows the test log-loss. In each plot, the x-axis shows the number of passes made through the data. Markers are shown at 0, 1, 2, 4, 7, and 9 passes through the data. 
}
\label{fig:gp_class}
\end{figure*}

We first consider binary classification by using a GP model with a Bernoulli-logit likelihood on three datasets: Sonar, Ionosphere, and USPS-3vs5. These datasets can be found at the UCI data repository\footnote{{\url{https://archive.ics.uci.edu/ml/datasets.html}}} and their details are discussed in \cite{Kuss05}. For the GP prior, we use the zero mean-function and a squared-exponential covariance function with hyperparameters $\sigma$ and $l$ as defined in \cite{Kuss05} (see Eq. 33). We set the values of the hyperparameters using cross-validation. For the three datasets, the hyperparameters $(\log l, \log\sigma)$ are set to $(-1,6)$, $(1,2.5)$, and $(2.5,5)$, respectively.

\subsubsection{Performance Under a Fixed Step-Size}
In our first experiment, we compare the performance under a fixed step-size.
 The results also demonstrate the faster convergence of our method compared to gradient-descent methods. We compare the following four algorithms on the Ionosphere dataset: (1) batch gradient-descent (referred to as `GD'), (2) batch proximal-gradient algorithm (referred to as `PG'), (3) batch proximal-gradient algorithm with gradients approximated by using Monte Carlo (referred to as `PG-MC' and using $S=500$ samples), and (4) the proposed proximal-gradient stochastic variational-inference (referred to as `PG-SVI') method where stochastic gradients are obtained using \eqref{eq:gradient_approx} with $M=5$.

Figure \ref{fig:step_size_gp} shows the number of examples required for convergence versus the step-size. A lower number implies faster convergence. Convergence is assessed by monitoring the lower bound, and when the change in consecutive iterations do not exceed a certain threshold, we stop the algorithm. 

We clearly see that GD requires many more passes through the data, and proximal-gradient methods converge faster than GD. In addition, the upper bound on the step-size for PG is much larger than GD. This implies that PG can potentially take larger steps than the GD method. PG-SVI is surprisingly as fast as PG which shows the advantage of our approach over the approach of \cite{Khan15nips}. 

\subsubsection{Comparison with Adaptive Gradient Methods}
We also compare PG-SVI to SGD and four adaptive methods, namely ADADELTA \citep{zeiler2012adadelta}, RMSprop \citep{hintonTieleman}, ADAGRAD \citep{duchi2011adaptive}, and ADAM \citep{kingma2014adam}. The implementation details of these algorithms are given in the appendix. We compare the value of the lower bound versus number of passes through the data. We also compare the average log-loss on the test data,$- \sum_n \log \hat{p}_n/N_*$, where $\hat{p}_n = p(y_n|\sigma,l,\data_t)$ is the predictive probabilities of the test point $y_n$ given training data $\data_t$ and $N_*$ is the total number of test-pairs. A lower value is better for the log-loss, and a value of 1 is equal to the performance of random coin-flipping.

Figure \ref{fig:gp_class} summarizes the results. In these plots, lower is better for both objectives and one ``pass" means the number of randomly selected examples is equal to the total number of examples. 
Our method is much faster to converge than other methods. It always converged within 10 passes through the data while other methods required more than 100 passes.


\subsection{CORRELATED TOPIC MODEL}
We next show results for correlated topic model on two collections of documents, namely the NIPS and Associated Press (AP) datasets. The NIPS\footnote{{\url{https://archive.ics.uci.edu/}}} dataset contains 1500 documents from the NIPS conferences held between 1987 and 1999 (a vocabulary-size of 12,419 words and a total of around 1.9M words). The AP\footnote{\url{http://www.cs.columbia.edu/~blei/lda-c/index.html}} collection contains 2,246 documents from the Associated Press (a vocabulary-size of 10,473 words and a total of 436K observed words). We use 50\% of the documents for training and 50\% for testing.

We compare to the delta method and the Laplace method discussed in \citet{WangBlei}, and also to the original mean-field (MF) method of \citet{blei2007correlated}. For these methods, we use an available implementation.\footnote{\url{https://www.cs.princeton.edu/~chongw/resource.html}} All of these methods approximate the lower bound by using approximations to the expectation of log-sum-exp functions (see the appendix for details). We compare these methods to the two versions of our algorithm which do not use such approximations, but instead use a stochastic gradient as explain in Section \ref{sec:stoch_approx}. Specifically, we use the following two versions: one with full covariance (referred to as PG-SVI), and the other with a diagonal covariance (referred to as PG-SVI-MF). For both of these algorithms, we use a fixed step-size of 0.001, and a mini-batch size of 2 documents.

Following \citet{WangBlei} we compare the held-out log-likelihood which is computed as follows: a new test document $\vy$ is split into two halves $(\vy^1,\vy^2)$, then we compute the approximate posterior $q(\vz)$ to the posterior $p(\vz|\vy^1)$ and use this to compute the held-out log-likelihood for each $y_n \in \vy^2$ using
\begin{align}
\log p(y_n) \approx \log \int_z \sqr{\sum_{k=1}^K \beta_{n,k} \frac{e^{z_k}}{\sum_j e^{z_j}}}^{y_{n}} q(\vz) d\vz 
\end{align}
We use a Monte Carlo to this quantity by using a large number of samples from $q$ (unlike \cite{WangBlei} who approximate it by using the Delta method). We report the average of this quantity over all words in $\vy^2$.

\begin{figure}[!t] 
\center
\subfigure{\includegraphics[height=1.3in]{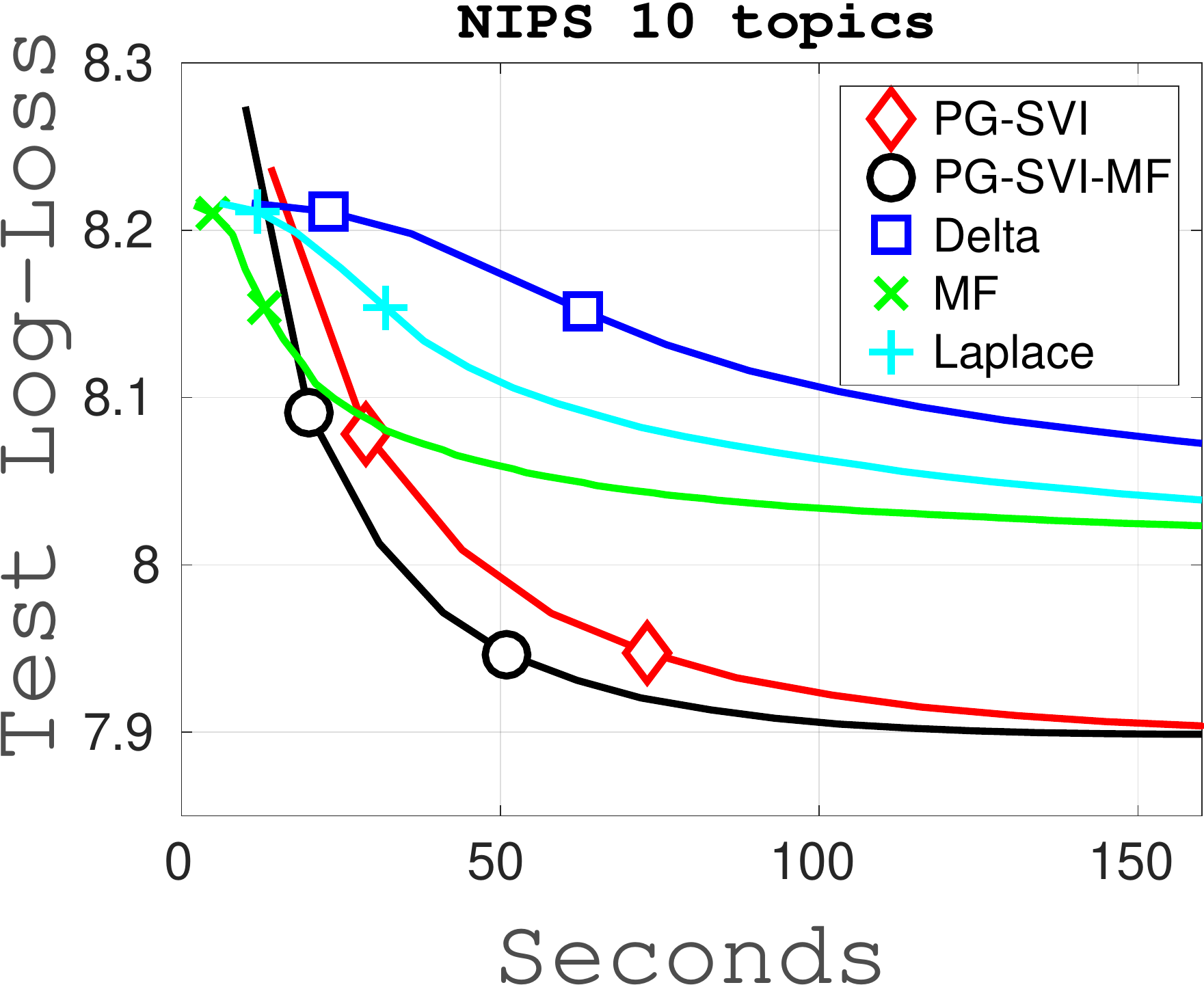} }
\hfill
\subfigure{\includegraphics[trim={1.1cm 0 0 0},clip, height=1.3in]{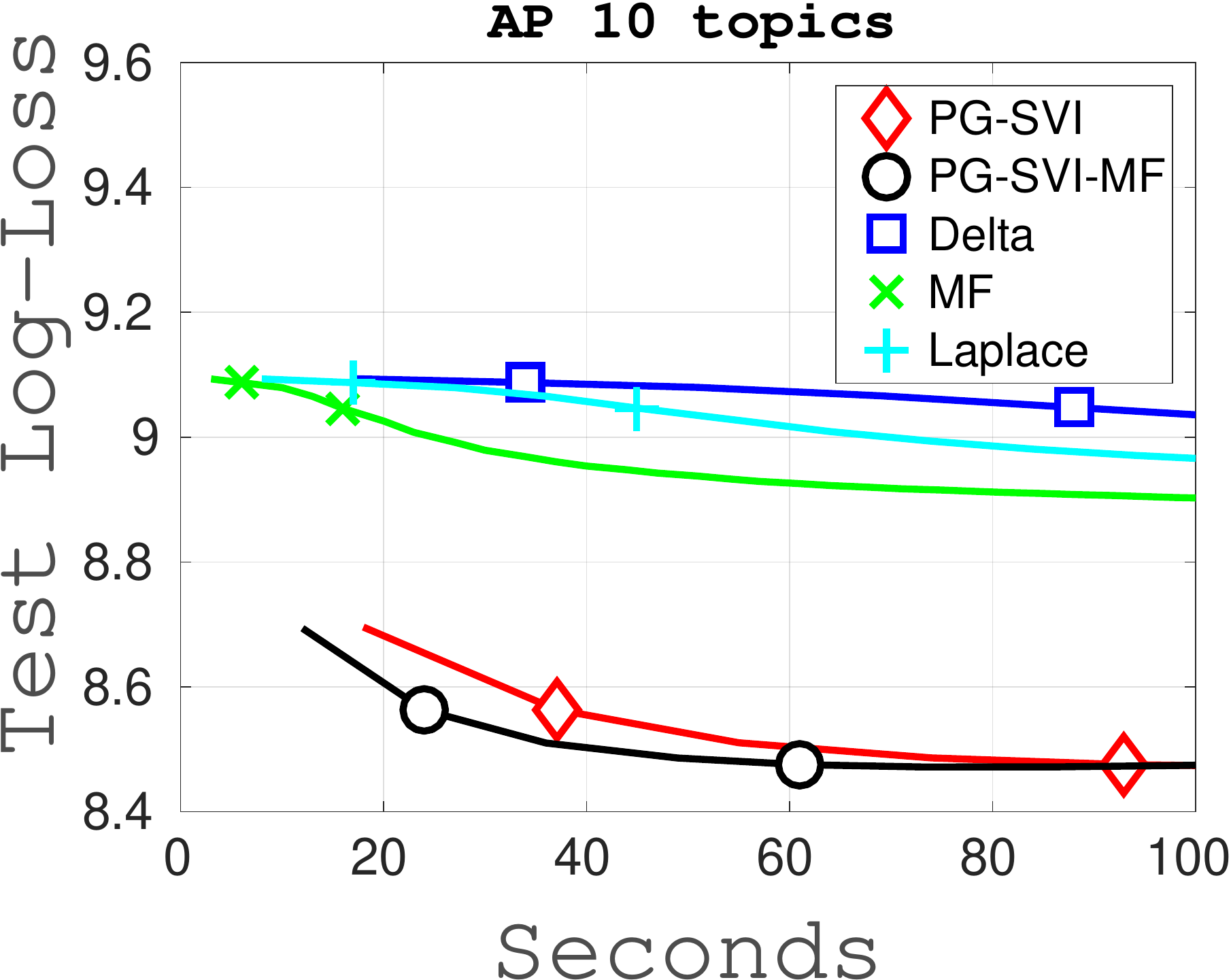} \label{fig:AP}}
\caption{Results on NIPS (left) and AP (right) datasets using correlated topic model with 10 topics. We plot the negative of the average held-out log-likelihood versus time. Markers are shown at iterations after second and fifth passes through the data. 
}
\label{fig:nips}
\end{figure}

Figure~\ref{fig:nips} shows the negative of the average held-out log-likelihood versus time for 10 topics (lower values are better). 
We see that methods based on proximal-gradient algorithm converge a little bit faster than the existing methods. More importantly, they achieves better performance. This could be due to the fact that we do not approximate the expectation of the log-sum-exp function, unlike the delta and Laplace method. We obtained similar results when we used a different number of topics. 


\section{DISCUSSION}

This work has made two contributions. First, we proposed a new variational inference method that combines variable splitting, stochastic gradients, and general divergence functions. This method is well-suited for a huge variety of the variational inference problems that arise in practice, and we anticipate that it may improve over state of the art methods in a variety of settings.
Our second contribution is a theoretical analysis of the convergence rate of this general method. Our analysis generalizes existing results for the mirror descent algorithm in optimization, and establishes convergences rates of a variety of existing variational inference methods. Due to its generality we expect that this analysis could be useful to establish convergence rates of other algorithms that we have not thought of, perhaps beyond the variational inference settings we consider in this work. However, an open problem that is also discussed by~\citet{ghadimi2014mini} it to esbatlish convergence to an arbitrary accuracy with a fixed batch size.

One issue that we have not satisfactorily resolved is giving a theoretically-justified way to set the step-size in practice; our analysis only indicates that it must be sufficiently small. However, this problem is common in many methods in the literature and our analysis at least suggests the factors that should be taken into account. 
Another open issue is the applicability our method to many other latent variable models; in this paper we have shown applications to variational-Gaussian inference, but we expect that our method should result in simple updates for a larger class of latent variable models such as non-conjugate exponential family distribution models. Additional work on these issues will improve usability of our method.

\newpage
\bibliography{paper}
\bibliographystyle{icml2016}

\newpage
\onecolumn
\begin{appendix}
\def\norms#1{\|#1\|^2}
\def\norm#1{\|#1\|}
\def\gt{\tilde{g}_{\lambda,k}}
\def\ex#1{\mathbb E [#1]}

\section{Examples of Splitting for Variational-Gaussian Inference}
We give detailed derivations for the splitting-examples shown in Section 3.1 in the main paper.
As in the main paper, we denote the Gaussian posterior distribution by $q(\vz|\vlambda) := \gauss(\vz|\vm,\vV)$, so that $\vlambda = \{\vm,\vV\}$ with $\vm$ being the mean and $\vV$ being the covariance matrix.

\subsection{Gaussian Process (GP) Models}
Consider GP models for $N$ input-output pairs $\{y_n,\vx_n\}$ indexed by $n$.
Let $z_n := f(\vx_n)$ be the latent function drawn from a GP with a zero-mean function and a covariance function $\kappa(\vx,\vx')$. We denote the Kernel matrix obtained on the data $\vx_n$ for all $n$ by $\vK$.

We use a non-Gaussian likelihood $p(y_n|z_n)$ to model the output, and assume that each $y_n$ is independently sampled from this likelihood given $\vz$. The joint-distribution over $\vy$ and $\vz$ is shown below:
\begin{align}
p(\vy,\vz) = \prod_{n=1}^N p(y_n|z_n) \gauss(\vz|0,\vK)
\end{align}
The ratio required for the lower bound is shown below, along with the split, where non-Gaussian terms are in $\tp_d$ and Gaussian terms are in $\tp_e$:
\begin{align}
\frac{p(\vy,\vz)}{q(\vz|\vm,\vV)} = \underbrace{\prod_{n=1}^N p(y_n|z_n)}_{\tp_d(\mathbf{z}|\boldsymbol{\lambda})} \underbrace{\frac{\gauss(\vz|0,\vK)}{\gauss(\vz|\vm,\vV)}}_{\tp_e(\mathbf{z}|\boldsymbol{\lambda})}.
\end{align}
By substituting in Eq. 1 of the main paper, we can obtain the lower bound $\elbofinal$ after a few simplifications, as shown below:
\begin{align}
\elbofinal(\vm,\vV) &:= \myexpect_{q(\mathbf{z})} \sqr{ \log \frac{p(\vy,\vz)}{q(\vz|\vm,\vV)} } , \\
&= \myexpect_{q(\mathbf{z})} \sqr{ \sum_{n=1}^N \log p(y_n|z_n)} + \myexpect_{q(\mathbf{z})} \sqr{\log \frac{\gauss(\vz|0,\vK)}{\gauss(\vz|\vm,\vV)} }  , \\
&= \underbrace{ \sum_{n=1}^N \mathbb{E}_{q}[\log p(y_n|z_n)] }_{-f(\boldsymbol{\lambda})} - \underbrace{ \dkls{}{\gauss(\vz|\vm,\vV)}{\gauss(\vz|0,\vK)}}_{h(\boldsymbol{\lambda})}. 
\end{align}
The assumption A2 is satisfied since the KL divergence is convex in both $\vm$ and $\vV$. This is clear from the expression of the KL divergence:
\begin{align}
D_{KL}\sqr{\gauss(\vz|\vm,\vV) || \gauss(\vz|0,\vK) } &= \half[-\log|\vV\vK^{-1}| + \trace(\vV\vK^{-1}) + \vm^T \vK^{-1} \vm - D]
\end{align}
where $D$ is the dimensionality of $\vz$.
Convexity w.r.t. $\vm$ follows from the fact that the above is quadratic in $\vm$ and $\vK$ is positive semi-definite. Convexity w.r.t. $\vV$ follows due to concavity of $\log|\vV|$ (trace is linear, so does not matter).

Assumption A1 depends on the choice of the likelihood $p(y_n|z_n)$, but is usually satisfied. The simplest example is a Gaussian likelihood for which the function $f$ takes the following form:
\begin{align}
f(\vm,\vV) &= \sum_{n=1}^N \myexpect_q[-\log p(y_n|z_n)] = \sum_{n=1}^N \myexpect_q[-\log \gauss(y_n|z_n,\sigma^2)] \\
&= \sum_{n=1}^N \half \log (2\pi\sigma^2) + \frac{1}{2\sigma^2} \sqr{ (y_n-m_n)^2 + v_n} 
\end{align}
where $m_n$ is the $n$'th element of $\vm$ and $v_n$ is the $n$'th diagonal entry of $\vV$. This clearly satisfies A1, since the objective is quadratic in $\vm$ and linear in $\vV$.

Here is an example where A1 is not satisfied: for Poisson likelihood $\log p(y_n|z_n) = \exp[y_nz_n-e^{z_n}] / y_n!$ with rate parameter equal to $e^{z_n}$, the function $f$ takes the following form:
\begin{align}
f(\vm,\vV) &= \sum_{n=1}^N \myexpect_q[-\log p(y_n|z_n)] = \sum_{n=1}^N [- y_nm_n + e^{m_n+v_n/2} + \log (y_n!) ]
\end{align}
whose derivative is not Lipschitz continuous since exponential functions are not Lipschitz. 

\subsection{Generalized Linear Models (GLMs)}
We now describe a split for generalized linear models. We model the output $y_n$ by using an exponential family distribution whose natural-parameter is equal to $\eta_n := \vx_n^T\vz$. Assuming a standard Gaussian prior over $\vz$, the joint distribution can be written as follows:
\begin{align}
p(\vy,\vz) := \prod_{n=1}^N p(y_n|\vx_n^T\vz) \gauss(\vz|0,\vI) 
\end{align}
A similar split can be obtained by putting non-conjugate terms $p(y_n|\vx_n^T\vz)$ in $\tp_d$ and the rest in $\tp_e$:
\begin{align*}
\frac{p(\vy,\vz)}{q(\vz|\vlambda)} = \underbrace{\prod_{n=1}^N p(y_n|\vx_n^T\vz)}_{\tp_d(\mathbf{z}|\boldsymbol{\lambda})} \underbrace{\frac{\gauss(\vz|0,\vI)}{\gauss(\vz|\vm,\vV)}}_{\tp_e(\mathbf{z}|\boldsymbol{\lambda})}.
\end{align*}
The lower bound can be shown to be the following:
\begin{align}
\elbofinal(\vm,\vV) := \underbrace{ \sum_{n=1}^N \mathbb{E}_{q}[\log p(y_n|\vx_n^T \vz)] }_{-f(\boldsymbol{\lambda})} - \underbrace{ \dkls{}{\gauss(\vz|\vm,\vV)}{\gauss(\vz|0,\vI)}}_{h(\boldsymbol{\lambda})}.
\end{align}
which is very similar to the GP case. Therefore, Assumptions A1 and A2 will follow with similar arguments.

\subsection{Correlated Topic Model (CTM)}
  We consider text documents with a vocabulary size $N$.
  Let $\vz$ be a length $K$ real-valued vector which follows a Gaussian distribution shown in \eqref{eq:ctm1}.
  Given $\vz$, a topic $t_{n}$ is sampled for the $n$'th word using a multinomial distribution shown in \eqref{eq:ctm2}.
  Probability of observing a word in the vocabulary is then given by \eqref{eq:ctm3}.
  \begin{align}
  p(\vz|\vtheta) &= \gauss(\vz|\vmu,\vSigma),  \label{eq:ctm1}\\
  p(t_{n}=k|\vz) &= \frac{\exp(z_{k})}{\sum_{j=1}^K\exp(z_{j})},  \label{eq:ctm2}\\
  p(\textrm{Observing a word v}|t_{n},\vtheta) &= \beta_{v,t_{n}} . \label{eq:ctm3}
  \end{align}
  Here $\vbeta$ is a $N\times K$ real-valued matrix with non-negative entries and columns that sum to 1.
  The parameter set for this model is given by $\vtheta = \{\vmu, \vSigma, \vbeta\}$.
We can marginalize out $t_n$ and obtain the data-likelihood given $\vz$,
\begin{align}
  p(\textrm{Observing a word v}|\vz,\vtheta) &= \sum_{k=1}^K p(\textrm{Observing a word v}|t_{n}=k,\vtheta) p(t_{n}=k|\vz) , \\
  &= \sum_{k=1}^K  \beta_{vk} \frac{e^{z_{k}}}{\sum_{j=1}^K e^{z_{j}}} . \label{eq:CTMlik}
  \end{align}
Given that we observe $n$'th word $y_n$ times, we can write the following joint distribution:
\begin{align}
p(\vy,\vz) := \prod_{n=1}^N \sqr{\sum_{k=1}^K \beta_{n,k} \frac{e^{z_k}}{\sum_j e^{z_j}}}^{y_{n}} \gauss(\vz|\vmu,\vSigma)
\end{align}
We can then use the following split:
\begin{align*}
\frac{p(\vy,\vz)}{q(\vz|\vlambda)} = \underbrace{\prod_{n=1}^N \sqr{\sum_{k=1}^K \beta_{n,k} \frac{e^{z_k}}{\sum_j e^{z_j}}}^{y_{n}} }_{\tp_d(\mathbf{z}|\boldsymbol{\lambda})} \underbrace{\frac{\gauss(\vz|\vmu,\vSigma)}{\gauss(\vz|\vm,\vV)}}_{\tp_e(\mathbf{z}|\boldsymbol{\lambda})},
\end{align*}
where $\vmu,\vSigma$ are parameters of the Gaussian prior and $\beta_{n,k}$ are parameters of $K$ multinomials.

The lower bound is shown below:
\begin{align}
\elbofinal(\vm,\vV) &:=  \sum_{n=1}^N y_n \crl{ \mathbb{E}_{q}\sqr{ \log \rnd{\sum_{k=1}^K \beta_{n,k} e^{z_k}} }} - W \myexpect_q \crl{\log \sqr{ \sum_{j=1}^K e^{z_j} }}  \nonumber\\
&\quad\quad -  \dkls{}{\gauss(\vz|\vm,\vV)}{\gauss(\vz|0,\vI)}.
\end{align}
where $W = \sum_n y_n$ is the total number of words. The top line is the function $[-f(\vlambda)]$ while the bottom line is $[-h(\vlambda)]$.

There are two intractable expectations in $f$, each involving expectation of a log-sum-exp function. Wang and Blei (2013) use the Delta method and Laplace method to approximate these expectations. In contrast, in PG-SVI algorithm, we use Monte Carlo to approximate the gradient of these functions.

\section{Proof of Proposition 1 and 2}
We first prove the Proposition 2. Proposition 1 is obtained as a special case of it. Our proof technique is borrowed from Ghadimi et. al. (2014). We extend their results to general divergence functions.

We denote the proximal projection at $\vlambda_k$ with gradient $\vg$ and step-size $\beta$ by,
\begin{align}
 &\mathcal{P}(\vlambda_k,\vg,\beta) := \frac{1}{\beta} (\vlambda_k -\vlambda_{k+1}), \label{eq:def_prox}\\
 &\quad\quad \textrm{ where }
\vlambda_{k+1} = \arg\min_{\boldsymbol{\lambda}\in\mathcal{S}} \,\,  \vlambda^T \vg + h(\vlambda) + \frac{1}{\beta} \ddd{}{\vlambda}{\vlambda_k}  \label{eq:prox_proj_l1}. 
\end{align}
The following lemma gives a bound on the norm of $\mathcal{P}(\vlambda_k,\vg,\beta)$.
\begin{lemma} \label{lemma:1}
The following holds for any $\vlambda_k\in\mathcal{S}$, any real-valued vector $\vg$ and $\beta>0$.
\begin{align}
&\vg^T \mathcal{P}(\vlambda_k, \vg,\beta)  \ge \alpha ||\mathcal{P}(\vlambda_k, \vg,\beta)||^2 + \frac{1}{\beta} [h(\vlambda_{k+1}) - h(\vlambda_k)]
\end{align}
\end{lemma}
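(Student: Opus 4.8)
The plan is to read off the first-order optimality condition for the proximal subproblem \eqref{eq:prox_proj_l1} that defines $\vlambda_{k+1}$, and then specialize it by testing against the feasible point $\vlambda_k$. Throughout, abbreviate $\mathcal{P} := \mathcal{P}(\vlambda_k,\vg,\beta) = \tfrac{1}{\beta}(\vlambda_k - \vlambda_{k+1})$, so that the identity $\vlambda_k - \vlambda_{k+1} = \beta\mathcal{P}$ drives all the algebra. The objective minimized in \eqref{eq:prox_proj_l1} is $\vlambda^T\vg + h(\vlambda) + \tfrac{1}{\beta}\mathbb{D}(\vlambda\|\vlambda_k)$. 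Since $\vlambda_{k+1}$ is a minimizer of this objective over the convex set $\mathcal{S}$, the first-order stationarity condition supplies a subgradient $\vs \in \partial h(\vlambda_{k+1})$ such that the variational inequality
\[
\rnd{\vg + \vs + \tfrac{1}{\beta}\nabla_{\boldsymbol{\lambda}}\mathbb{D}(\vlambda_{k+1}\|\vlambda_k)}^T (\vlambda - \vlambda_{k+1}) \ge 0
\]
holds for every $\vlambda\in\mathcal{S}$. I would then substitute the feasible choice $\vlambda = \vlambda_k$ and divide through by $\beta>0$, which gives $\vg^T\mathcal{P} \ge -\tfrac{1}{\beta}[\nabla_{\boldsymbol{\lambda}}\mathbb{D}(\vlambda_{k+1}\|\vlambda_k)]^T\mathcal{P} - \vs^T\mathcal{P}$. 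This reduces the lemma to separately lower-bounding the two terms on the right.

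For the divergence term, A6 is exactly the tool. Applying \eqref{eq:A4} to the pair $(\vlambda,\vlambda') = (\vlambda_{k+1},\vlambda_k)$ — a pair generated by \eqref{eq:subproblem}, as A6 requires — yields $(\vlambda_{k+1}-\vlambda_k)^T\nabla_{\boldsymbol{\lambda}}\mathbb{D}(\vlambda_{k+1}\|\vlambda_k) \ge \alpha\|\vlambda_{k+1}-\vlambda_k\|^2$. Since $\vlambda_{k+1}-\vlambda_k = -\beta\mathcal{P}$, this reads $-\beta[\nabla_{\boldsymbol{\lambda}}\mathbb{D}(\vlambda_{k+1}\|\vlambda_k)]^T\mathcal{P} \ge \alpha\beta^2\|\mathcal{P}\|^2$, and dividing by $\beta^2$ shows $-\tfrac{1}{\beta}[\nabla_{\boldsymbol{\lambda}}\mathbb{D}(\vlambda_{k+1}\|\vlambda_k)]^T\mathcal{P} \ge \alpha\|\mathcal{P}\|^2$, which produces the first term in the claim.

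For the $h$ term I would invoke convexity of $h$ (A2): the subgradient inequality $h(\vlambda_k) \ge h(\vlambda_{k+1}) + \vs^T(\vlambda_k - \vlambda_{k+1}) = h(\vlambda_{k+1}) + \beta\,\vs^T\mathcal{P}$ rearranges to $-\vs^T\mathcal{P} \ge \tfrac{1}{\beta}[h(\vlambda_{k+1}) - h(\vlambda_k)]$. Adding the two lower bounds into $\vg^T\mathcal{P} \ge -\tfrac{1}{\beta}[\nabla_{\boldsymbol{\lambda}}\mathbb{D}(\vlambda_{k+1}\|\vlambda_k)]^T\mathcal{P} - \vs^T\mathcal{P}$ gives precisely the stated inequality. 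The one subtlety worth care is the optimality condition itself: because $h$ is only a general (possibly nonsmooth) convex function and the minimization is constrained to $\mathcal{S}$, I must express stationarity as a variational inequality with a subgradient $\vs\in\partial h(\vlambda_{k+1})$ — equivalently, $-(\vg+\vs+\tfrac1\beta\nabla_{\boldsymbol{\lambda}}\mathbb{D})$ lies in the normal cone of $\mathcal{S}$ at $\vlambda_{k+1}$ — rather than naively setting a gradient to zero, and I must assume $\mathbb{D}$ is differentiable in its first argument so that $\nabla_{\boldsymbol{\lambda}}\mathbb{D}$, already used in A6, is well-defined. Everything else is routine manipulation of the identity $\vlambda_k - \vlambda_{k+1} = \beta\mathcal{P}$.
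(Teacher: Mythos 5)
Your proposal is correct and follows essentially the same route as the paper's own proof: you write the variational-inequality optimality condition for the prox subproblem \eqref{eq:prox_proj_l1} with a subgradient of $h$ at $\vlambda_{k+1}$, test it at the feasible point $\vlambda = \vlambda_k$, and then bound the divergence term via A6 and the subgradient term via convexity of $h$ (A2). Your treatment is in fact slightly more careful than the paper's, since you make the normal-cone formulation and the differentiability of $\mathbb{D}$ in its first argument explicit, but the decomposition and all key steps coincide.
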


\begin{proof}
By taking the gradient of $\vlambda^T \vg + \frac{1}{\beta} \ddd{}{\vlambda}{\vlambda_k}$ and picking any sub-gradient $\nabla h$ of $h$ at $\vlambda_{k+1}$, the corresponding sub-gradient of the right hand side of (\ref{eq:prox_proj_l1}) is given as follows:
\begin{align}
\vg + \nabla h(\vlambda_{k+1}) + \frac{1}{\beta} \nabla_{\lambda} \ddd{}{\vlambda_{k+1}}{\vlambda_k} .
\end{align}
We use this to derive the optimality condition of (\ref{eq:prox_proj_l1}). For any $\vlambda$, the following holds from the optimality condition:
\begin{align}
(\vlambda - \vlambda_{k+1})^T \Bigg[&\vg + \nabla h(\vlambda_{k+1}) + \frac{1}{\beta}  \nabla_{\lambda} \ddd{}{\vlambda_{k+1}}{\vlambda_k} \Bigg]  \ge 0 .
\end{align}
Letting $\vlambda = \vlambda_k$,
\begin{align}
(\vlambda_k - \vlambda_{k+1})^T \Bigg[&\vg + \nabla h(\vlambda_{k+1}) + \frac{1}{\beta} \nabla_{\lambda} \ddd{}{\vlambda_{k+1}}{\vlambda_k} \Bigg]  \ge 0 ,
\end{align}
which implies,
\begin{align}
\vg^T(\vlambda_k -\vlambda_{k+1}) &\ge \frac{1}{\beta} (\vlambda_{k+1} - \vlambda_k)^T  \nabla_{\lambda} \ddd{}{\vlambda_{k+1}}{\vlambda_k} + h(\vlambda_{k+1}) - h(\vlambda_k) , \\
&\ge \frac{\alpha}{\beta} ||\vlambda_{k+1} - \vlambda_k||^2 + h(\vlambda_{k+1}) - h(\vlambda_k) .
\end{align}
The first line follows from Assumption A2 (convexity of $h$), and the second line follows from Assumption A6.
\end{proof}

Now, we are ready to prove Proposition 2: 
\begin{proof}
Let $\gt:=\mathcal{P}(\vlambda_{k},\nabla f(\vlambda_k),\beta_{k})$. 
Since $\nabla f$ is L-smooth (Assumption A1),  for any $k=0,1,\ldots, t-1$ we have, 
\begin{align*}
f(\vlambda_{k+1}) &\leq f(\vlambda_{k}) + \left< \nabla f(\vlambda_k), \vlambda_{k+1}-\vlambda_k\right>+ \frac L 2 \norms{\vlambda_{k+1}-\vlambda_{k}}, \\  
& = f(\vlambda_{k}) - \beta_k\left< \nabla f(\vlambda_k), \gt \right>+ \frac L 2 \beta_k^2\norms{\gt}, \\
&\leq f(\vlambda_{k}) - \beta_k\alpha\norms{\gt}-[h(\vlambda_{k+1}) - h(\vlambda_k)]+ \frac L 2 \beta_k^2\norms{\gt}.
\end{align*}
The second line follows from the definition of $\mathcal{P}$ and the last line is due to Lemma 1. Rearranging the terms and using $-\elbofinal  = f + h$ we get: 
\begin{align*}
&-\elbofinal(\vlambda_{k+1}) + \elbofinal(\vlambda_{k}) \leq -[\beta_k\alpha - \frac L 2 \beta_k^2]\norms{\gt} ,\\
\Rightarrow\quad\quad & \elbofinal(\vlambda_{k+1}) -\elbofinal(\vlambda_{k}) \geq [\beta_k\alpha - \frac L 2 \beta_k^2]\norms{\gt} .
\end{align*}
Summing these term for all $k=0,1,\dots t-1$, we get the following: 
\begin{align*}
\elbofinal(\vlambda_{t-1}) -\elbofinal(\vlambda_{0}) \geq \sum_{k=0}^{t-1} [\beta_k\alpha - \frac L 2 \beta_k^2]\norms{\gt} .
\end{align*}
By noting that the global maximum of the lower bound always upper bounds any other value, we get $\elbofinal(\vlambda_{*}) -\elbofinal(\vlambda_{0}) \geq \elbofinal(\vlambda_{t-1}) -\elbofinal(\vlambda_{0})$. Using this, 
\begin{align*}
&\elbofinal(\vlambda_{*}) -\elbofinal(\vlambda_{0}) \geq \sum_{k=0}^{t-1}[\beta_k\alpha - \frac L 2 \beta_k^2]\norms{\gt} , \\ 
\Rightarrow\quad\quad &\min_{k=0,1,\dots, t-1 }\norms{\gt} [{\sum_{k=0}^{t-1}[\beta_k\alpha - \frac L 2 \beta_k^2]}] \leq  \elbofinal(\vlambda_{*}) -\elbofinal(\vlambda_{0}) .
\end{align*}
Since we assume at least one of $\beta_k < 2\alpha/L$, we can divide by the summation term, to get the following: 
\[
\min_{k=0,1,\dots,t-1 }\norms{\gt} \leq  \frac {\elbofinal(\vlambda_{*}) -\elbofinal(\vlambda_{1})}{{\sum_{k=0}^{t-1} [\beta_k\alpha - \frac L 2 \beta_k^2]} },\]
which proves  Proposition 2.
\end{proof}

Proposition 1 can be obtained by simply plugging in $\beta_k = \alpha/L$,
\[
\min_{k=0,1,\dots,t-1 }\norms{\gt} \leq  \frac {C_0}{{\sum_{k=0}^{t-1}[\frac{\alpha^2}{L} - \frac{\alpha^2}{2L} ]} } = \frac{2C_0L}{\alpha^2 t} .
\]

\section{Proof of Proposition 3}
We will first prove the following theorem, which gives a similar result to Proposition 2 but for a stochastic gradient $\widehat{\nabla} f$. 

\begin{thm} \label{thm:main2}
If we choose the step-size $\beta_k$ such that $0<\beta_k\le 2\alpha_*/L$ with $\beta_k< 2\alpha_*/L$ for at least one $k$, then, 
\begin{equation}
\label{eqtm22}
\begin{split}
&\myexpect_{R,\boldsymbol{\xi}} [\norm{G_R}^2]
\le \frac{C_0+ \frac{c\sigma^2}{2} \sum_{k=0}^{t-1} \frac{\beta_k}{M_k}} {\sum_{k=0}^{t-1} \rnd{ \alpha_* \beta_k - L\beta_k^2/2}}.
\end{split}
\end{equation}
where the expectation is taken over $R\in \{0,1,2,\ldots,t-1\}$ which is a discrete random variable drawn from the probability mass function
\begin{align}
Prob(R=k) = \frac{\alpha_*\beta_k - L\beta_k^2/2}{ \sum_{k=0}^{t-1} \rnd{\alpha_*\beta_k- L\beta_k^2/2} }, \nonumber
\end{align}
and over $\vxi := \{\vxi_1,\vxi_2,\ldots,\vxi_{t-1}\}$ with $\vxi_k$ is the noise in the stochastic approximation $\widehat{\nabla} f$. 
\end{thm}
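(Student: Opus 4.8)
The plan is to follow the deterministic argument for Proposition~2 almost verbatim, inserting the stochastic gradient into the descent step and carefully controlling the resulting noise. Writing $G_k = \mathcal{P}(\vlambda_k, \widehat{\nabla}f(\vlambda_k), \beta_k)$ so that $\vlambda_{k+1} = \vlambda_k - \beta_k G_k$, I would first apply $L$-smoothness of $f$ (A1) to obtain $f(\vlambda_{k+1}) \le f(\vlambda_k) - \beta_k\langle\nabla f(\vlambda_k), G_k\rangle + \tfrac{L}{2}\beta_k^2\norms{G_k}$. The one wrinkle relative to the deterministic proof is that this inner product contains the \emph{exact} gradient while Lemma~1 speaks about the gradient fed to the prox-map. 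I would bridge this by writing $\nabla f(\vlambda_k) = \widehat{\nabla}f(\vlambda_k) - \vdelta_k$ with noise $\vdelta_k := \widehat{\nabla}f(\vlambda_k) - \nabla f(\vlambda_k)$, applying Lemma~1 to $\langle\widehat{\nabla}f(\vlambda_k), G_k\rangle$, and using $-\elbofinal = f+h$, which produces the one-step inequality
\begin{align*}
-\elbofinal(\vlambda_{k+1}) \le -\elbofinal(\vlambda_k) - \beta_k\alpha\norms{G_k} + \beta_k\langle\vdelta_k, G_k\rangle + \tfrac{L}{2}\beta_k^2\norms{G_k}.
\end{align*}

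The hard part will be the cross term $\beta_k\langle\vdelta_k, G_k\rangle$: because $G_k$ is itself the output of a prox-map built from the noisy gradient, $G_k$ and $\vdelta_k$ are statistically dependent, so I cannot make this term vanish by taking a conditional expectation. The resolution I would use is a \emph{pointwise} Young's inequality $\langle\vdelta_k, G_k\rangle \le \tfrac{c}{2}\norms{\vdelta_k} + \tfrac{1}{2c}\norms{G_k}$, valid for any $c>0$. Folding the $\tfrac{1}{2c}\norms{G_k}$ contribution into the $-\beta_k\alpha\norms{G_k}$ term is exactly what replaces $\alpha$ by $\alpha_* = \alpha - 1/(2c)$, and the hypothesis $c > 1/(2\alpha)$ is precisely what makes $\alpha_*>0$; this step is where both constants in the statement originate. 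The inequality then reads $-\elbofinal(\vlambda_{k+1}) \le -\elbofinal(\vlambda_k) - (\alpha_*\beta_k - \tfrac{L}{2}\beta_k^2)\norms{G_k} + \tfrac{c\beta_k}{2}\norms{\vdelta_k}$.

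From here the argument is routine. Taking expectation conditioned on the history through step $k$ and using A3--A4 (with the $M_k$-sample mini-batch reducing the variance by a factor $M_k$) bounds $\myexpect\norms{\vdelta_k} \le \sigma^2/M_k$. I would then sum over $k = 0,\dots,t-1$, take the total expectation so the $-\elbofinal$ terms telescope, and use that the optimum dominates the last iterate, $\elbofinal^* \ge \myexpect[\elbofinal(\vlambda_t)]$, to arrive at
\begin{align*}
\sum_{k=0}^{t-1}\rnd{\alpha_*\beta_k - \tfrac{L}{2}\beta_k^2}\myexpect\norms{G_k} \le C_0 + \tfrac{c\sigma^2}{2}\sum_{k=0}^{t-1}\frac{\beta_k}{M_k}.
\end{align*}
The step-size restriction $\beta_k \le 2\alpha_*/L$, strict for at least one $k$, guarantees that every coefficient $\alpha_*\beta_k - L\beta_k^2/2$ is nonnegative with strictly positive sum, so dividing by $\sum_k(\alpha_*\beta_k - L\beta_k^2/2)$ is legitimate and recasts the left side as $\myexpect_{R,\vxi}\norms{G_R}$ for the randomized index $R$ drawn from the stated probability mass function, which establishes the theorem. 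Proposition~3 is then recovered by specializing to $\beta_k = \alpha_*/L$ and $M_k = M$, for which each coefficient equals $\alpha_*^2/(2L)$ and the two sums collapse into the claimed $\tfrac{2LC_0}{\alpha_*^2 t} + \tfrac{c\sigma^2}{M\alpha_*}$.
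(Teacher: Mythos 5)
Your proposal is correct and follows essentially the same route as the paper's own proof: $L$-smoothness, splitting $\nabla f(\vlambda_k) = \widehat{\nabla} f(\vlambda_k) - \vdelta_k$, Lemma~1 on the stochastic-gradient term, Young's inequality to absorb the cross term into $\alpha_* = \alpha - 1/(2c)$, then expectation, telescoping, and division by the coefficient sum to produce the randomized-index bound. Your explicit remark that $G_k$ and $\vdelta_k$ are dependent—so the cross term cannot be killed by conditional expectation and must be handled pointwise—is a nice articulation of why the paper's argument is structured this way, but it is the same argument.
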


\begin{proof}
Let $\gt:=\mathcal{P}(\vlambda_{k},\widehat{\nabla} f(\lambda_{k}),\beta_{k}), \, \delta_k:=\widehat{\nabla} f(\lambda_{k})- \nabla f(\vlambda_k)$. Since $\nabla f$ is L-smooth, for any $k=0,1,\dots, t$ we have, 
\begin{align}
f(\vlambda_{k+1}) &\leq f(\vlambda_{k}) + \left< \nabla f(\vlambda_k), \vlambda_{k+1}-\vlambda_k\right>+ \frac L 2 \norms{\lambda_{k+1}-\vlambda_{k}}\\  
& = f(\vlambda_{k}) - \beta_k\left< \nabla f(\vlambda_k), \gt \right>+ \frac L 2 \beta_k^2\norms{\gt}\\
& = f(\vlambda_{k}) - \beta_k\left<\widehat{\nabla} f(\lambda_{k}), \gt \right>+ \frac L 2 \beta_k^2\norms{\gt}+\beta_k\left<\delta_k, \gt \right>
\end{align}
where we have used the definition of $\gt$ and $\delta_k$. 
Now using Lemma \ref{lemma:1} on the second term and Cauchy-Schwarz for the last term, we get the following:
\begin{align}
f(\vlambda_{k+1}) &\leq f(\vlambda_{k}) -\left[ \alpha \beta_k\norms{\gt}+h(\vlambda_{k+1})-h(\vlambda_{k}) \right]  + \frac L 2 \beta_k^2\norms{\gt}+\beta_k\norm{\delta_k}\norm{\gt} 
\end{align}
After rearranging and using Young's inequality $\norm{\delta_k}\norm{\gt} \leq (c/2)\norm{\delta_k}^2+1/(2c) \norm{\gt}^2$ given a constant $c>0$, we get 
\begin{align}
-\elbofinal(\vlambda_{k+1}) &\leq -\elbofinal(\vlambda_{k}) - \alpha \beta_k\norms{\gt} + \frac L 2 \beta_k^2\norms{\gt}+ \frac {\beta_k}{ 2c} \norms{\gt}+\frac {\beta_k c} 2 \norms{\delta_k}\\
&= -\elbofinal(\vlambda_{k}) - \left( (\alpha-1/(2c))\beta_k-\frac L 2 \beta_k^2\right)\norms{\gt}+ \frac {c\beta_k} 2 \norms{\delta_k}
\end{align}

Now considering $c > 1/(2\alpha)$, $\alpha_* = \alpha - 1/(2c)$ and $\beta_k \leq \frac {2\alpha_*} {L}$, and summing up both sides for iteration $k=0,1\dots, t-1$, we obtain
\begin{align}
\sum_{k=0}^{t-1}  &\left( \alpha_*\beta_k-\frac L 2 \beta_k^2\right)\norms{\gt} \leq \elbofinal^* - \elbofinal(\vlambda_0)+ \sum_{k=0}^{t-1} \frac {c\beta_k} 2 \norms{\delta_k}
\end{align}
Now by taking expectation w.r.t. $\vxi$ on both sides and using the fact that $\myexpect_{\boldsymbol{\xi}}{\norms{\delta_k}} \leq \frac {\sigma^2}{M_k}$ by assumption $A3$ and $A4$, we get 
\begin{align}
\sum_{k=0}^{t-1}  &\left( \alpha_*\beta_k-\frac L 2 \beta_k^2\right)\myexpect_{\boldsymbol \xi}{\norms{\gt}} \leq C_0 + \frac {c\sigma^2}{2}\sum_{k=0}^{t-1} \frac {\beta_k} {M_k}  \label{eq:ex111}
\end{align}
Writing the expectation with respect to $R$ and $\vx$ we get
\begin{align}
\label{eq:36}
\mathbb E_{R,\xi}[\norms{\tilde{g}_{\lambda_k,R}}]= \frac {\sum_{k=0}^{t-1}  \left(\alpha_*\beta_k-\frac L 2 \beta_k^2\right)\myexpect_{\xi} {\norms{\gt}}}{\sum_{k=0}^{t-1}  \left( \alpha_*\beta_k-\frac L 2 \beta_k^2\right)},
\end{align}
whose numerator is the left side of \eqref{eq:ex111}. Dividing \eqref{eq:ex111} by $\sum_{k=0}^t \left(\alpha_*\beta_k-\frac L 2 \beta_k^2\right)$ and using this in~\eqref{eq:36} we get the result. 
\end{proof}

By substituting $\beta_k = \alpha_*/L$  and $M_k = M$ in \eqref{eqtm22},
\begin{align}
\myexpect_{R,\boldsymbol{\vxi}} [\norm{G_R}^2]
&\le  \frac{C_0+ \frac{c\sigma^2}{2} \sum_{k=0}^{t-1} \frac{\beta_k}{M_k}} {\sum_{k=0}^{t-1} \rnd{ \alpha_* \beta_k - L\beta_k^2/2}} \\
&= \frac{C_0+ \frac{c\sigma^2\alpha_* t}{2L M}} {\frac{\alpha_*^2 t}{2L} } =  \rnd{\frac{2LC_0}{\alpha_*^2 t} + \frac{c\sigma^2}{M\alpha^*}}
\end{align}
The probability distribution for $R$ reduces to a uniform distribution in this case, with the probability of each iteration being $1/t$. This proves Proposition 3.

\section{Derivation of Closed-Form Updates for the GP Model}
The PG-SVI iterations $\vlambda_{k+1} = \min_{\lambda\in \mathcal{S}} \vlambda^T \sqr{ \widehat{\nabla} f(\vlambda_k)} + h(\vlambda) + \frac{1}{\beta_k} \mathbb{D}(\vlambda\|\vlambda_k)$ takes the following form for the GP model, as discussed in Section 6 of the main paper:
\begin{flalign}
(\vm_{k+1},\vV_{k+1}) = \arg\min_{\mathbf{m},\mathbf{V}\succ 0} \,\, 
& (m_n \alpha_{n_k,k} + \half v_{n} \gamma_{n_k,k}) 
 + D_{KL}\sqr{\gauss(\vz|\vm,\vV) || \gauss(\vz|0,\vK) }  &\nonumber\\
&\quad\quad\quad + \frac{1}{\beta_k} D_{KL}\sqr{\gauss(\vz|\vm,\vV) || \gauss(\vz|\vm_k,\vV_k) }. & 
\label{eq:VG_prox}
\end{flalign}
where $n_k$ is the example selected in $k$'th iteration.
We will now show that its solution can be  obtained in closed-form.

\subsection{Full Update of $\vV_{k+1}$}
We first derive the full update of $\vV_{k+1}$.
The KL divergence between two Gaussian distributions is given as follows:
\begin{align}
D_{KL}\sqr{\gauss(\vz|\vm,\vV) || \gauss(\vz|0,\vK) } &= -\half[\log|\vV\vK^{-1}| - \trace(\vV\vK^{-1}) - \vm^T \vK^{-1} \vm + D]
\end{align}
Using this, we expand the last two terms of \eqref{eq:VG_prox} to get the following,
\begin{align}
\label{eq:proximal_objective_function}
&- \half \sqr{\log|\vV\vK^{-1}| - \trace(\vV\vK^{-1}) - \vm^T \vK^{-1} \vm + D} \nonumber\\
& - \half\frac{1}{\beta_k}\sqr{\log|\vV\vK^{-1}| - \trace\{\vV\vV_k^{-1}\} - (\vm -\vm_k)^T \vV_k^{-1} (\vm -\vm_k) + D}  \nonumber \\
&= -\half \left[ \rnd{1+\frac{1}{\beta_k}} \log|\vV| - \trace\{ \vV(\vK^{-1} + \frac{1}{\beta_k} \vV_k^{-1}) \} - \vm^T \vK^{-1}\vm  \nonumber \right. \\
&\quad\quad\quad \left. - \frac{1}{\beta_k} (\vm -\vm_k)^T \vV_k^{-1} (\vm -\vm_k) + \rnd {1+\frac{1}{\beta_k}} \rnd{D - \log|\vK| } \right]  
\end{align}

Taking derivative of \eqref{eq:VG_prox} with respect to $\vV$ at $\vV=\vV_{k+1}$ and setting it to zero, we get the following (here $\vI_n$ is a matrix with all zeros, except the $n$'th diagonal element which is set to 1):
\begin{align}
\Rightarrow\quad& - \rnd{1+ \frac{1}{\beta_k}} \vV_{k+1}^{-1} + \rnd{\vK^{-1} + \frac{1}{\beta_k}\vV_{k}^{-1}} + \gamma_{n_k,k} \vI_{n_k} = 0 \\ 
\Rightarrow\quad& \vV_{k+1}^{-1} = \frac{1}{1+\beta_k} \vV_k^{-1} + \frac{\beta_k}{1+\beta_k} \rnd{ \vK^{-1} + \gamma_{n_k,k}\vI_{n_k} } \\
\Rightarrow\quad& \vV_{k+1}^{-1} = r_k \vV_k^{-1} + (1-r_k) \rnd{ \vK^{-1} + \gamma_{n_k,k}\vI_{n_k} } \label{eq:full_update_V}
\end{align}
which gives us the update of $\vV_{k+1}$ for $r_k := 1/(1+\beta_k)$.

\subsection{Avoiding a full update of $\vV_{k+1}$}
A full update will require storing the matrix $\vV_{k+1}$. Fortunately, we can avoid storing the full matrix and still do an exact update. The key point here is to notice that to compute the stochastic gradient in the next iteration we only need one diagonal element of $\vV_{k+1}$ rather than the whole matrix. Specifically, if we sample $n_{k+1}$'th example at the iteration $k+1$, then we need to compute $v_{n_{k+1},k+1}$ which is the $n_{k+1}$'th diagonal element of $\vV_{k+1}$. This can be done by solving one linear equation, as we show in this section.
Specifically, we show that the following updates can be used to compute $v_{n_{k+1},k+1}$:
\begin{align}
v_{n_{k+1},k+1} &= \kappa_{n_{k+1},n_{k+1}} - \boldsymbol{\kappa}_{n_{k+1}}^T \rnd{\vK + [\diag(\tvgamma_k)]^{-1}}^{-1} \boldsymbol{\kappa}_{n_{k+1}}, \label{eq:updateV11}
\end{align}
where $\tvgamma_{k} = r_k \tvgamma_{k-1} + (1-r_k) \gamma_{n_k,k}\vone_{n_k}$ ($\vone_n$ is a vector of all zeros except its $n$'th entry which is equal to 1).
We start the recursion with $\tvgamma_0 = \epsilon$ where $\epsilon$ is a small positive number. 

We will now show that $\vV_k$ can be reparameterized in terms of a vector $\tvgamma_k$ which contains accumulated weighted sum of the gradient $\gamma_{n_j,j}$, for all $j\le k$. To show this, we recursively substitute the update of  $\vV_{j}$ for $j<k+1$, as shown below (recall that $n_k$ is the example selected at the $k$'th iteration).
The second line is obtained by substituting the full update of $\vV_k$ by using \eqref{eq:full_update_V}. The third line is obtained after a few simplifications. The fourth line is obtained by substituting the update of $\vV_{k-1}$ and a few simplifications.
\begin{align}
& \vV_{k+1}^{-1} = r_k \vV_k^{-1} + (1-r_k) \sqr{ \vK^{-1} + \gamma_{n_k,k}\vI_{n_k} } \\
&= r_k \sqr{r_{k-1} \vV_{k-1}^{-1} + (1-r_{k-1}) \rnd{ \vK^{-1} + \gamma_{n_{k-1},k-1}\vI_{n_{k-1}}} }  + (1-r_k) \sqr{ \vK^{-1} + \gamma_{n_k,k}\vI_{n_k} }\\
&= r_k r_{k-1} \vV_{k-1}^{-1} + (1-r_kr_{k-1}) \vK^{-1} +  \sqr{ r_k(1-r_{k-1}) \gamma_{n_{k-1},k-1} \vI_{n_{k-1}} + (1-r_k)\gamma_{n_k,k}\vI_{n_k} } \nonumber\\
&= r_k r_{k-1} r_{k-2} \vV_{k-2}^{-1} + (1-r_k r_{k-1} r_{k-2}) \vK^{-1}  \nonumber\\
& + \sqr{ r_k r_{k-1} (1-r_{k-2})\gamma_{n_{k-2},k-2}\vI_{n_{k-2}} + r_k(1-r_{k-1}) \gamma_{n_{k-1},k-1}\vI_{n_{k-1}} + (1-r_k) \gamma_{n_{k},k}\vI_{n_{k}}} 
\end{align}
This update expresses $\vV_{k+1}$ in terms of $\vV_{k-2}$, $\vK$, and gradients of the data example selected at $k,k-1,$ and $k-2$.
Continuing in this fashion until $k=0$, we can write the update as follows:
\begin{align}
&\vV_{k+1}^{-1} = t_k \vV_0^{-1} + (1 - t_k) \vK^{-1} + [ r_k r_{k-1} \ldots r_3 r_2 (1-r_1) \gamma_{n_1,1} \vI_{n_1} \nonumber \\
&+ r_k r_{k-1} \ldots r_4 r_3 (1-r_2) \gamma_{n_2,2} \vI_{n_2} + r_k r_{k-1} \ldots r_5 r_4 (1-r_3) \gamma_{n_3,3} \vI_{n_2} + \ldots \nonumber\\
& +  r_k r_{k-1} (1-r_{k-2})\gamma_{n_{k-2},k-2}\vI_{n_{k-2}} + r_k(1-r_{k-1}) \gamma_{n_{k-1},k-1}\vI_{n_{k-1}} + (1-r_k) \gamma_{n_{k},k}\vI_{n_{k}} ] \label{eq:temp345}
\end{align}
where $t_k$ is the product of $r_k,r_{k-1},\ldots, r_0$. We can write the updates more compactly by defining the accumulation of the gradients $\gamma_{n_j,j}$ for all $j\le k$ by a vector $\tvgamma_k$,
\begin{align}
&\vV_{k+1}^{-1} = t_k \vV_0^{-1} + (1 - t_k) \vK^{-1} + \diag(\tvgamma_k) 
\end{align}
The vector $\tvgamma_k$ can be obtained by using a recursion. We illustrate this below, where we have grouped the terms in \eqref{eq:temp345} to show the recursion for $\tvgamma_k$ (here $\vone_{n}$ is a vector with all zero entries except $n$'th entry which is set to 1):
\begin{center}
    \includegraphics{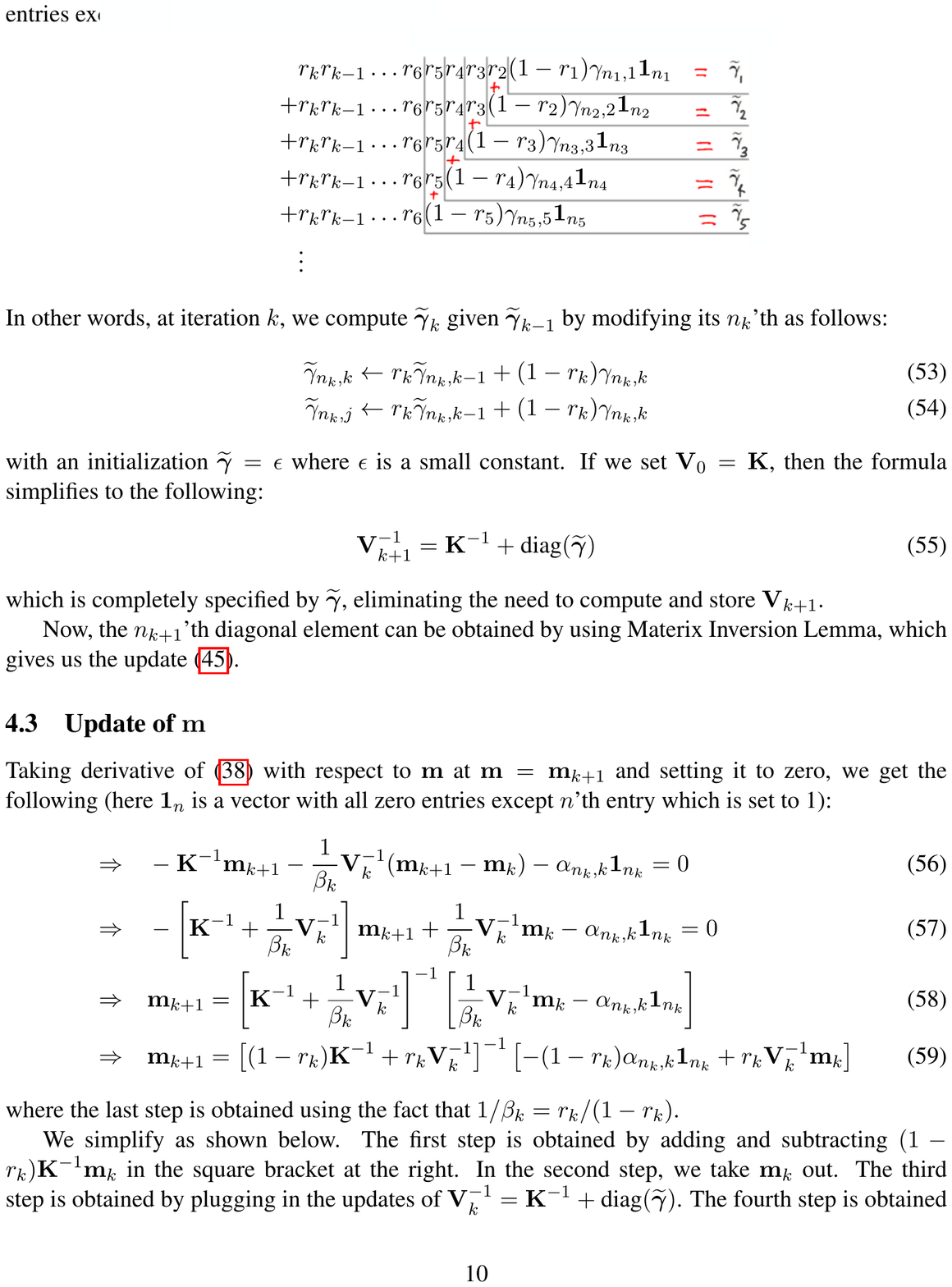}
\end{center}
Therefore, $\tvgamma_k$ can be recursively updated as follows:
\begin{align}
\tvgamma_k = r_k \tvgamma_{k-1} + (1-r_k) \gamma_{n_k,k} \vone_{n_k} 
\end{align}
with an initialization $\tvgamma_0 = \epsilon$ where $\epsilon$ is a small constant to avoid numerical issues.

If we set $\vV_0 = \vK$, then the formula simplifies to the following:
\begin{align}
\vV_{k+1}^{-1} &= \vK^{-1} + \diag(\tvgamma_k) \label{eq:updateVk}
\end{align}
which is completely specified by $\tvgamma_k$, eliminating the need to compute and store $\vV_{k+1}$.

The $n_{k+1}$'th diagonal element can be obtained by using Matrix Inversion Lemma, which gives us the update \eqref{eq:updateV11}.

\subsection{Update of $\vm$}
Taking derivative of \eqref{eq:VG_prox} with respect to $\vm$ at $\vm=\vm_{k+1}$ and setting it to zero, we get the following (here $\vone_{n}$ is a vector with all zero entries except $n$'th entry which is set to 1):
\begin{align}
\Rightarrow\quad& - \vK^{-1}\vm_{k+1} - \frac{1}{\beta_k} \vV_k^{-1} (\vm_{k+1} - \vm_k) -\alpha_{n_k,k} \vone_{n_k}  = 0 \\
\Rightarrow\quad& - \sqr{\vK^{-1} + \frac{1}{\beta_k} \vV_k^{-1}}\vm_{k+1} + \frac{1}{\beta_k} \vV_k^{-1} \vm_k  -\alpha_{n_k,k} \vone_{n_k} = 0 \\
\Rightarrow\quad& \vm_{k+1} = \sqr{\vK^{-1} + \frac{1}{\beta_k} \vV_k^{-1}}^{-1} \sqr{ \frac{1}{\beta_k} \vV_k^{-1} \vm_k - \alpha_{n_k,k} \vone_{n_k}} \\
\Rightarrow\quad& \vm_{k+1} = \sqr{(1-r_k)\vK^{-1} + r_k \vV_k^{-1}}^{-1} \sqr{-(1-r_k)\alpha_{n_k,k} \vone_{n_k} + r_k \vV_k^{-1} \vm_k } 
\end{align}
where the last step is obtained using the fact that $1/\beta_k = r_k/(1-r_k)$.

We simplify as shown below. The second line is obtained by adding and subtracting $(1-r_k)\vK^{-1}\vm_k$ in the square bracket at the right. In the the third line, we take $\vm_{k}$ out. The fourth line is obtained by plugging in the updates of $\vV_k^{-1} = \vK^{-1} +\diag(\tvgamma_k)$. The fifth line is obtained by using Matrix-Inversion lemma, and the sixth line is obtained by taking $\vK^{-1}$ out of the right-most term.
\begin{align}
&\vm_{k+1} =  \sqr{(1-r_k) \vK^{-1} + r_k \vV_k^{-1}}^{-1} \sqr{-(1-r_k)\alpha_{n_k,k} \vone_{n_k} + r_k \vV_k^{-1} \vm_k}\\
&=  \sqr{(1-r_k) \vK^{-1} + r_k \vV_k^{-1}}^{-1} \sqr{(1-r_k) \{- \vK^{-1}\vm_k - \alpha_{n_k,k} \vone_{n_k} \} + \{ (1-r_k)\vK^{-1} + r_k \vV_k^{-1}\} \vm_k} \nonumber\\
&= \vm_k + (1-r_k) \sqr{(1-r_k) \vK^{-1} + r_k \vV_k^{-1}}^{-1} \rnd{-\vK^{-1}\vm_k - \alpha_{n_k,k} \vone_{n_k}}  \\
&= \vm_k - (1-r_k) \sqr{\vK^{-1} + r_k \diag(\tvgamma_{k-1})}^{-1} \rnd{ \vK^{-1}\vm_k + \alpha_{n_k,k} \vone_{n_k}} \\
&= \vm_k - (1-r_k) \sqr{\vK -\vK \rnd{\vK + \diag(r_k \tvgamma_{k-1})^{-1}}^{-1} \vK} \rnd{ \vK^{-1}\vm_k + \alpha_{n_k,k} \vone_{n_k}} \\
&= \vm_k - (1-r_k) \sqr{\vI - \vK \rnd{\vK + \diag(r_k \tvgamma_{k-1})^{-1}}^{-1}} \rnd{ \vm_k + \alpha_{n_k,k}\boldsymbol{\kappa}_{n_k}} \\
&= \vm_k - (1-r_k) (\vI - \vK\vB_k^{-1}) (\vm_k + \alpha_{n_k,k}\boldsymbol{\kappa}_{n_k}) 
\end{align}
where $\vB_k := \vK + [\diag(r_k \tvgamma_{k-1})]^{-1}$.

Since $r_k\tvgamma_{k-1}$ and $\tvgamma_k$ differ only slightly (by the new example gradient $\gamma_{n_k}$, we can instead use the following approximate update:
\begin{align}
\vm_{k+1} &= \vm_k - (1-r_k) (\vI - \vK\vA_k^{-1}) (\vm_k + \alpha_{n_k,k}\boldsymbol{\kappa}_{n_k}) 
\end{align}
where $\vA_k := \vK + [\diag(\tvgamma_{k})]^{-1}$.

\section{Closed-Form Updates for GLMs}
We rewrite the lower bound as
\begin{align}
-\elbofinal(\vm,\vV) := \underbrace{\sum_{n=1}^N f_n(\tm_n,\tv_n)}_{f(\boldsymbol{m},\boldsymbol{V})} + \underbrace{\dkls{}{\gauss(\vz|\vm,\vV)}{\gauss(\vz|0,\vI)}}_{h(\vm,\vV)} \label{eq:glm_lb_1}
\end{align}
where $f_n(\tm_n,\tv_{n}):= -\mathbb{E}_{q}[\log p(y_n|\vx_n^T\vz)]$ with $\tm_n := \vx_n^T$ and $\tv_n := \vx_n^T\vV\vx_n$. We can compute a stochastic approximation to the gradient of $f$ by randomly selecting an example $n_k$ (choosing $M=1$) and using a Monte Carlo gradient approximation to the gradient of $f_{n_k}$.
Similar to GP, we define the following as our gradients of function $f_n$:
\begin{align}
\alpha_{n_k,k} := N \nabla_{\tm_{n_k}} f_{n_k} (\tm_{n_k},\tv_{n_k}), \quad
\gamma_{n_k,k} := 2N \nabla_{\tv_{n_k}} f_{n_k} (\tm_{n_k},\tv_{n_k})
\end{align}
The PG-SVI iteration can be written as follows:
\begin{flalign}
(\vm_{k+1},\vV_{k+1}) = \arg\min_{\mathbf{m},\mathbf{V}\succ 0} \,\, 
& (\tm_n \alpha_{n_k,k} + \half \tv_{n} \gamma_{n_k,k}) 
 + D_{KL}\sqr{\gauss(\vz|\vm,\vV) || \gauss(\vz|0,\vI) }  &\nonumber\\
&\quad\quad\quad + \frac{1}{\beta_k} D_{KL}\sqr{\gauss(\vz|\vm,\vV) || \gauss(\vz|\vm_k,\vV_k) }. & 
\end{flalign}
Using a similar derivation to the GP model, we can show that the following updates will give us the solution: 
\begin{align}
&\tvgamma_k = r_k \tvgamma_{k-1} + (1-r_k) \gamma_{n_k,k} \vone_{n_k} , \nonumber \\
&\tvm_{k+1} = \tvm_k - (1-r_k) (\vI - \vK\vA_k^{-1}) (\vm_k + \alpha_{n_k,k}\boldsymbol{\kappa}_{n_k}) , \nonumber\\
&\tv_{n_{k+1}, k+1} = \kappa_{n_{k+1},n_{k+1}} - \boldsymbol{\kappa}_{n_{k+1}}^T \vA_k^{-1} \boldsymbol{\kappa}_{n_{k+1}},  
\end{align}
where $\vK = \vX\vX^T$ and $\tvm_k := \vX^T\vm$.

\section{Description of the Dataset for Binary GP Classification}
\begin{center}
\begin{tabular}{|l|r|r|r|}
\hline
 & Sonar & Ionosphere & USPS \\
\hline
\# of data points & 208 & 351 & 1,781 \\
\# of features & 60 & 34 & 256 \\
\# of training data points & 165 & 280 & 884 \\
\hline
\end{tabular}
\end{center}

\section{Description of Algorithms for Binary GP Classification}
We give implementation details of all the algorithms used for binary GP- classification experiment.
For all methods, we compute a stochastic estimate of the gradient by using a mini-batch size of 5, 5, and 20 for the three datasets: Sonar, Ionosphere, and USPS-3vs5 respectively.
Similarly, the number of MC samples used are 2000, 500, and 2000.

For GD, SGD, and all the adaptive methods, $\vlambda := \{\vm,\vL\}$ where $\vL$ is the Cholesky factor of $\vV$.
The algorithmic parameters of these methods is given in Table \ref{tab:SGD}. Below, we give details of their updates.

For the GD method, we use the following update:
\begin{equation}
    \vlambda_{k+1} = \vlambda_k + \alpha \nabla \elbofinal(\vlambda_k), 
\end{equation}
where $\alpha$ is a fixed step-size.

For the SGD method, we use a stochastic gradient, instead of the exact gradient: 
\begin{equation}
    \vlambda_{k+1} = \vlambda_k- \alpha_k \vg_k, 
\end{equation}
where $\alpha_k = (k+1)^{-\kappa}$ is the step-size and $\vg_k := -\widehat{\nabla} \elbofinal(\vlambda_k)$.

We use the following updates for ADAGRAD:
\begin{align}
\vs_k &= \vs_{k-1} + \rnd{\vg_k \odot \vg_k},  \\
\vlambda_{k+1} &= \vlambda_k - \alpha_0 \sqr{\frac{1}{\sqrt{\vs_k + \epsilon}}} \odot \vg_k .
\end{align}
where $\alpha_0$ is a fixed step-size and $\epsilon$ is a small constant used to avoid numerical errors.

We use the following update for RMSprop:
\begin{align}
\vs_k &= \rho \vs_{k-1} + (1-\rho) \rnd{\vg_k \odot \vg_k}, \\
\vlambda_{k+1} &= \vlambda_k - \alpha_0 \sqr{\frac{1}{\sqrt{\vs_k + \epsilon}}} \odot \vg_k,
\end{align}
where $\alpha_0$ is a fixed step-size and $\rho$ is the decay factor.

We use the following updates for ADADELTA:
\begin{align}
\vs_k &= \rho \vs_{k-1} + (1-\rho) \rnd{\vg_k \odot \vg_k}, \\
\vlambda_{k+1} &= \vlambda_k - \vg_k^{AD}, \quad \textrm{ where } \vg_k^{AD} = \alpha_0 \rnd{\frac{\sqrt{\vdelta_k + \epsilon}}{\sqrt{\vs_k + \epsilon}}} \odot \vg_k, \\ 
\vdelta_{k+1} &= \rho \vdelta_k + (1-\rho) \rnd{\vg_k^{AD} \odot \vg_k^{AD}}.
\end{align}
where again $\alpha_0$ is a fixed step-size, and $\rho$ is the decay factor.

Finally, the updates for ADAM are shown below:
\begin{align}
\vmu_k &= \rho_\mu \vmu_{k-1} + (1-\rho_\mu) \vg_k, \\
\vs_k &= \rho_s \vs_{k-1} + (1-\rho_s) \rnd{\vg_k \odot \vg_k}, \\
\vg_{s,k} &= \sqrt{\frac{\vs_k}{1-\rho_s^k}}, \\ 
\vlambda_{k+1} &= \vlambda_k - \alpha_0 \sqr{\frac{1}{\vg_{s,k} + \epsilon}} \odot \sqr{\frac{\vmu_k}{1-\rho_\mu^k}}. 
\end{align}
where $\alpha_0$ is a fixed step-size and $\rho_\mu, \rho_s$ are decay factors.

\begin{table}[h]
\caption{Algorithmic parameters for Binary GP classification experiment (Figure 2 in the main paper). $N$ is the number of training examples.} 
\center
\begin{tabular}{|l|r|r|r|}
\hline
Parameter & Sonar & Ionosphere & USPS \\
\hline
\multicolumn{4}{c}{SGD}\\
\hline
$\kappa$  & 0.8  & 0.51 & 0.6  \\
$\alpha_0 \times N$ & 1200 &  25 & 800 \\
\hline
\multicolumn{4}{c}{ADAGRAD}\\
\hline
$\alpha_0$ & 4.5 & 4 & 8 \\
\hline
\multicolumn{4}{c}{RMSprop}\\
\hline
$\alpha_0$ & 0.1 & 0.04 & 0.1 \\
$\rho$ & 0.9 & 0.9999 & 0.9 \\
\hline
\multicolumn{4}{c}{ADADELTA}\\
\hline
$\alpha_0$ & 1.0 & 0.1 & 1.0 \\
$1 - \rho$  & $5 \times 10^{-10} $ & $10^{-11}$ & $10^{-12}$ \\
\hline
\multicolumn{4}{c}{ADAM}\\
\hline
$\alpha_0$ & 0.04 & 0.25 & 2.5 \\
$\rho_{\mu}$  & 0.9 & 0.9 & 0.9  \\
$\rho_{s}$  & 0.999 & 0.999 & 0.999  \\
\hline
\multicolumn{4}{c}{PG-SVI}\\
\hline
$\beta_k \times N$ & 0.2 & 2.0 & 2.5 \\
\hline
\end{tabular}
\label{tab:SGD}
\end{table}

\end{appendix}

\end{document}